\definecolor{yxc}{RGB}{255,0,0}
\definecolor{yjc}{RGB}{125,0,0}
\definecolor{ytw}{RGB}{255,69,0}
\definecolor{gen}{RGB}{0,0,200}
\newcommand{\mymid}{\,|\,}
\definecolor{yanxi}{RGB}{0,200,100}
\title{Provable Efficiency of Guidance in Diffusion Models \\for General Data Distribution}
\author{Gen Li\footnote{The authors contributed equally.} \thanks{Department of Statistics, The Chinese University of Hong Kong, Hong Kong; Email: \texttt{genli@cuhk.edu.hk}.}\and Yuchen Jiao\footnotemark[1] \thanks{Department of Statistics, The Chinese University of Hong Kong, Hong Kong; Email: \texttt{yuchenjiao@cuhk.edu.hk}.}}
\date{\today}
\begin{document}

\theoremstyle{plain} \newtheorem{lemma}{\textbf{Lemma}}\newtheorem{proposition}{\textbf{Proposition}}\newtheorem{theorem}{\textbf{Theorem}}

\theoremstyle{assumption}\newtheorem{assumption}{\textbf{Assumption}}
\theoremstyle{remark}\newtheorem{remark}{\textbf{Remark}}

\maketitle

\begin{abstract}
Diffusion models have emerged as a powerful framework for generative modeling, with guidance techniques playing a crucial role in enhancing sample quality.
Despite their empirical success, a comprehensive theoretical understanding of the guidance effect remains limited.
Existing studies only focus on case studies, where the distribution conditioned on each class is either isotropic Gaussian or supported on a one-dimensional interval with some extra conditions. 
How to analyze the guidance effect beyond these case studies remains an open question.
Towards closing this gap, we make an attempt to analyze diffusion guidance under general data distributions.
Rather than demonstrating uniform sample quality improvement, which does not hold in some distributions, we prove that guidance can improve the whole sample quality, in the sense that the average reciprocal of the classifier probability decreases with the existence of guidance. 
This aligns with the motivation of introducing guidance.
\end{abstract}

%\tableofcontents

\section{Introduction}

Score-based diffusion models have recently emerged as an expressive and flexible class of generative models, demonstrating competitive performance on image and audio synthesis tasks \citep{sohl2015deep,song2019generative,ho2020denoising,song2020score,song2021maximum,croitoru2023diffusion,ramesh2022hierarchical,rombach2022high,saharia2022photorealistic}. 
These models operate through a forward process, which progressively transforms data from the target distribution into Gaussian noise, and a reverse process that generates samples.
The reverse process typically involves approximating the score function---defined as the gradient of the log-likelihood of noisy distributions---at various scales by training a neural network \citep{hyvarinen2005estimation,ho2020denoising,hyvarinen2007some,vincent2011connection,song2019generative,pang2020efficient}, followed by solving a reverse stochastic differential equation (SDE) associated with the forward process.
Recent studies have rigorously established the convergence of diffusion models, demonstrating that the generated sample distribution approximates the target distribution \citep{lee2022convergence,lee2022convergence2,chen2022sampling,benton2023nearly,chen2023improved,li2024sharp,gupta2024faster,chen2023the,li2024accelerating,li2024adapting,li2024improved,li2024provable,huang2024denoising,cai2025minimax,li2025dimension}.

As diffusion models become a dominant paradigm for generative modeling in domains such as image, video, and audio, the need for principled methods to modulate their output has grown significantly. For instance, when the data comprises multiple classes, one may seek to generate samples specific to a desired class. 
In practice, the standard approach is to use diffusion guidance \citep{dhariwal2021diffusion,ho2021classifier}, a technique that enhances sample quality by incorporating an auxiliary conditional score function. This method combines the model's score estimate with the gradient of the log-probability of samples conditioned on the desired class through a weighted sum, enabling the generation of outputs with high perceptual quality when an appropriate guidance weight is applied.
Reference \citep{karras2024guiding} proposed to use a bad version of the model for guiding diffusion models.

\subsection{Motivation}

Despite the empirical success and widespread adoption of guidance methods, their theoretical foundations remain unexplored. A key question persists: why does guidance improve the quality of samples generated by diffusion models? Existing literature offers partial insights through case studies, analyzing guidance dynamics in limited scenarios such as mixtures of compactly supported distributions or isotropic Gaussian distributions \citep{chidambaram2024does,wu2024theoretical,bradley2024classifier}. 
However, the effect of guidance across general data distributions remains unknown, and we discover that the uniform improvement does not hold even for Gaussian mixture distributions (see Figure~\ref{fig:exp}),
which highlights a significant gap in our understanding.

\subsection{Our Contributions}

Motivated by the above discoveries, this paper investigates the improvement on the average of the reciprocal of classifier probabilities under general data distributions.
We demonstrate that guidance preferentially enhances the generation of samples associated with higher classifier probabilities, which aligns with the primary motivation for adding guidance. 
Specifically, 
%consider the continuous-time counterparts of diffusion models with guidance, 
we prove that the expectation of the reciprocal of classifier probabilities decreases with guidance. 
This metric bears resemblance to the commonly used Inception Score (IS), a standard measure of sample quality \citep{salimans2016improved}, which also considers the expectation of the (logarithmic) function of classifier probabilities.
Furthermore, we extend our analysis to practical implementations, with discrete-errors and score estimation errors. We prove that the discrete-time processes approximate their continuous-time counterparts, ensuring the applicability of our theoretical results in practical settings.

\paragraph{Comparison with prior works when restricted to specific distributions:}
Existing works focus mainly on specific classes of distributions like GMMs, while our work provides a more general theoretical analysis. Here we compare our findings with prior works when restricted to specific distributions.
In \citet{wu2024theoretical}, the authors demonstrate that $p_{c | X_0}(1 | Y_1^w)\ge p_{c | X_0}(1 | Y_1^0)$ holds under specific conditions, while we show that this inequality does not always hold. In addition, \citet{chidambaram2024does} argues that guidance can degrade the performance of diffusion models, as it may introduce mean overshoot and variance shrinkage. In contrast, our result shows that guidance can improve sample quality by generating more samples of high quality. Furthermore, \citet{bradley2024classifier} shows that classifier guidance can not generate samples from $p(x | c)^{\gamma}p(x)^{1-\gamma}$ for GMMs and establishes its connection to an alternative approach, i.e., the single-step predictor-corrector method, whose effectiveness in this specific setting remains unclear. In contrast, we directly analyze and demonstrate the effectiveness of CFG.

\paragraph{Organization.} The organization of this paper is as follows.
Section \ref{sec:background} provides an overview of diffusion models, guidance, and their continuous time limit.
Section \ref{sec:main} presents the main theoretical results and analysis, with detailed proofs included in Section \ref{sec:analysis}.
Finally, we conclude the paper in Section \ref{sec:discussion} with further discussions.

\section{Background}
\label{sec:background}

In this section, we review basics about diffusion models, guidance, and their continuous limit.
Throughout this paper, we shall use $n = 1,\cdots, N$ and $0\le t \le 1$ to denote the discrete and continuous time steps, respectively.

\subsection{Diffusion models}
\label{subsec:diffusion}

Diffusion models are based on a forward process that progressively transforms data from a target distribution into a sequence of increasingly noisy representations.
Starting from $X_0\in\mathbb{R}^{d}$ drawn from the target distribution $p_{\mathsf{data}}$, the forward process evolves as follows:
\begin{subequations}\label{eq:forward}
\begin{align}
	X_0&\sim p_{\mathsf{data}},\\
	X_{n} &= \sqrt{1-\beta_{n}}X_{n-1} + \sqrt{\beta_n} Z_n
	\quad  n = 1,\cdots, N, 
\end{align}
\end{subequations}
where $0<\beta_n<1$ is the step-size, $\{Z_n\}_{1\le n\le N} \overset{\mathrm{i.i.d.}}{\sim} \mathcal{N}(0,I_d)$ is a sequence of independent Gaussian noise vectors.
This process gradually converts the original distribution into standard Gaussian noise as  $n$ increases.

An essential component of score-based diffusion models is the score function, defined as the gradient of the log-probability of the intermediate distributions in the forward process:
\begin{align*}%\label{eq-def-score}
s_n^{\star}(x) := \nabla\log p_{X_n}(x),\quad 1\le n\le N.
\end{align*}
Assuming access to good approximations of the score functions, denoted $s_{n}(x) \approx s_{n}^{\star}(x)$, one can utilize them to reverse the forward process and generate samples resembling the target distribution. The reverse process is governed by:
% it is shown that the following process can generate $Y_n$ with distribution close to $X_n$:
\begin{subequations}\label{eq:reverse}
\begin{align}
	Y_N&\sim \mathcal{N}(0, I_d),\\
	Y_{n-1} &= \frac{1}{\sqrt{1-\beta_{n}}}\big(Y_n+\beta_{n}s_{n}(Y_n)\big) + \sqrt{\beta_{n}} Z_n, 
\end{align}
\end{subequations}
for $n = N,\cdots, 2$,
where $Z_n \overset{\mathrm{i.i.d.}}{\sim} \mathcal{N}(0,I_d)$ denotes another sequence of independent Gaussian noise vectors.
This reverse process has been shown to gradually remove noise and guide the system back toward the target distribution, in the sense that the generated $Y_n$ has distribution close to that of $X_n$ in \eqref{eq:forward}.

\subsection{Guidance}

%The above framework can be generalized to conditional sampling easily by considering:

%In addition to unconditional diffusion model approximating $p_{\mathsf{data}}$,
Conditional diffusion models are designed to sample from the conditional distributions 
$p(\cdot|c)$, where $c$ represents a specific class label.
This can be achieved by generalizing the unconditional diffusion model defined in \eqref{eq:reverse}, replacing $s_n(Y_n)$ with $s_n(Y_n|c)$, as shown below:
%It seeks to sample from the conditional distribution $p(\cdot|c)$, which can be generalized from \eqref{eq:reverse} by replacing $s_n(Y_n)$ with $s_n(Y_n|c)$ as follows:
\begin{subequations}\label{eq:condition}
\begin{align}
	Y_N&\sim \mathcal{N}(0, I_d),\\
	Y_{n-1} &= \frac{1}{\sqrt{1-\beta_{n}}}\big(Y_n+\beta_{n}s_{n}(Y_n\mymid c)\big) + \sqrt{\beta_{n}} Z_n,
\end{align}
\end{subequations}
for $n = N,\cdots, 2$,
where $s_{n}(x|c)$ are good estimates of the gradient of the log-density function $p_{X_n\mymid c}$, given the condition $c$. That is, $s_{n}(x|c)\approx s_{n}^{\star}(x|c) = \nabla\log p_{X_n\mymid c}(x\mymid c)$.
The noise terms $Z_n \overset{\mathrm{i.i.d.}}{\sim} \mathcal{N}(0,I_d)$ represent a sequence of independent Gaussian noise vectors.

%Later on, researchers propose to add some guidance with the hope to increase $p_{c\mymid X_0}(c\mymid Y_0)$ as following:
To further enhance the quality of conditional sampling, researchers introduced guidance techniques. These methods aim to increase the posterior probability $p_{c\mymid X_0}(c\mymid Y_0)$ by modifying the reverse process as follows:
\begin{align}\label{eq:guidance-1}
	Y_{n-1}^w &= \frac{1}{\sqrt{1-\beta_{n}}}\big(Y_n^w+\beta_{n}(s_{n}(Y_n^w\mymid c)  + w\nabla\log p_{c\mymid X_n}(c\mymid Y_n^w))\big) + \sqrt{\beta_{n}} Z_n,
\end{align}
where the guidance scale $w$ controls the strength of the modification.
Furthermore, reverse process \eqref{eq:guidance-1} can be approximated as
\begin{align}
\label{eq:guidance}
Y_{n-1}^w &= \frac{1}{\sqrt{1-\beta_{n}}}\big(Y_n^w+\beta_{n}((1+w)s_{n}(Y_n^w\mymid c) - ws_{n}(Y_n^w)\big) + \sqrt{\beta_{n}} Z_n.
\end{align}
This approximation is derived from the observation that $\nabla\log p_{c\mymid X_n}(c\mymid x) = s_{n}^{\star}(x\mymid c) - s_{n}^{\star}(x)$, which is referred to as classifier free guidance \citep{ho2021classifier}.
%where the second line is from the observation that $\nabla\log p_{c\mymid X_n}(c\mymid x)) = s_{n}^{\star}(x\mymid c) - s_{n}^{\star}(x)$, and is called classifier free guidance \citep{ho2021classifier} and $w$ denotes the guidance scale.

%Here, $p_{X_n\mymid c}$ denotes the density function of $X_n$ when given the condition $c$.

\subsection{Continuous time limit}

%It is found that the diffusion process in \eqref{eq:forward} has a nice correspondence with its following continuum limit:
The discrete-time diffusion process described in Section \ref{subsec:diffusion} exhibits a natural correspondence to its continuous-time counterpart.
Specifically, the forward process corresponds to the following stochastic differential equation (SDE):
\begin{subequations}
\label{eq:SDE}
\begin{align}
%\mathrm{d}X_t = -\frac{1}{2}\beta(t)X_t\mathrm{d}t + \sqrt{\beta(t)}\mathrm{d}B_t,
\mathrm{d}X_t &= -\frac{1}{2(1-t)}X_t\mathrm{d}t + \frac{1}{\sqrt{1-t}}\mathrm{d}B_t,\\
&\text{with }X_0 \sim p_{\mathsf{data}},
\quad\text{for }0 \le t \le 1-\delta,\notag
\end{align}
where $B_t$ denotes the standard Brownian motion, and $\delta>0$ can be arbitrarily small. 
It transforms the data distribution into a standard Gaussian distribution as $t\to 1$.
Similarly, the reverse process in \eqref{eq:condition} corresponds to the following continuous-time SDE:
\begin{align}
%\mathrm{d}Y_t = \Big(\frac{1}{2}Y_t + \nabla\log p_{X_{T-t}}(Y_t)\Big)\beta(T-t)\mathrm{d}t + \sqrt{\beta(T-t)}\mathrm{d}B_t,
\mathrm{d}Y_t &= \Big(\frac{1}{2}Y_t + \nabla\log p_{X_{1-t}\mymid c}(Y_t\mymid c)\Big)\frac{\mathrm{d}t}{t} + \frac{1}{\sqrt{t}}\mathrm{d}B_t,
%\quad\text{with }Y_{\delta} \sim p_{X_{1-\delta}\mymid c},
\quad \text{for }\delta \le t \le 1.\notag
\end{align}
\end{subequations}
This reverse SDE effectively transforms the noise distribution back toward the target distribution conditioned on $c$, guided by the conditional score function $\nabla\log p_{X_{1-t}\mymid c}(Y_t\mymid c)$.
%Here, we use $B_t$ to denote the standard Brownian motion, and $\delta > 0$ can be arbitrarily small.
If the initialization $Y_{\delta} \sim p_{X_{1-\delta}\mymid c}$, it is well-known that $Y_t$ has the same distribution with the reverse process of $X_t$,
which is stated in the following lemma:
\begin{lemma} \label{lem:cont}
It can be shown that for $0 \le \tau \le t \le 1-\delta$,
\begin{align} \label{eq:cont-X}
X_t\mymid X_{\tau} \sim \mathcal{N}\bigg(\sqrt{\frac{1-t}{1-\tau}}X_{\tau}, \frac{t-\tau}{1-\tau}I\bigg),
\end{align}
and if $Y_{\delta} \sim p_{X_{1-\delta}\mymid c}$, then
\begin{align} \label{eq:cont-XY}
\{Y_t\} \overset{\mathrm{d}}{=} \{X_{1-t}\},
\quad\text{for }\delta \le t \le 1.
\end{align}
\end{lemma}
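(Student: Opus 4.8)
The plan is to prove the two assertions separately. For the transition kernel \eqref{eq:cont-X}, I would note that the forward equation in \eqref{eq:SDE} is a linear (time-inhomogeneous Ornstein--Uhlenbeck) SDE $\mathrm{d}X_t = a(t)X_t\,\mathrm{d}t + b(t)\,\mathrm{d}B_t$ with $a(t) = -\tfrac{1}{2(1-t)}$ and $b(t) = \tfrac{1}{\sqrt{1-t}}$. For such an SDE the conditional law of $X_t$ given $X_\tau$ is Gaussian, with mean $\Phi(t,\tau)X_\tau$ and covariance $\big(\int_\tau^t \Phi(t,s)^2 b(s)^2\,\mathrm{d}s\big)I$, where $\Phi(t,\tau) = \exp\big(\int_\tau^t a(s)\,\mathrm{d}s\big)$ is the integrating factor. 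A direct computation gives $\int_\tau^t a(s)\,\mathrm{d}s = \tfrac12\log\tfrac{1-t}{1-\tau}$, hence $\Phi(t,\tau) = \sqrt{(1-t)/(1-\tau)}$, and $\int_\tau^t \Phi(t,s)^2 b(s)^2\,\mathrm{d}s = (1-t)\int_\tau^t (1-s)^{-2}\,\mathrm{d}s = \tfrac{t-\tau}{1-\tau}$, which yields \eqref{eq:cont-X}. Equivalently, one can simply verify that the stated Gaussian density solves the Fokker--Planck equation of the forward SDE with the correct initial condition.

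For \eqref{eq:cont-XY}, the main tool is the classical time-reversal formula for diffusions (Anderson; Haussmann--Pardoux): if $X_s$ solves $\mathrm{d}X_s = \mu(s,X_s)\,\mathrm{d}s + \sigma(s)\,\mathrm{d}B_s$ on a time interval and has marginal densities $p_{X_s}$, then the time-reversed process is again an It\^o diffusion whose drift, at the running time corresponding to forward time $u$, equals $-\mu(u,\cdot) + \sigma(u)^2\nabla\log p_{X_u}(\cdot)$ and whose diffusion coefficient is $\sigma(u)$. I would first observe that conditioning the whole trajectory on the label $c$ does not alter the forward dynamics \eqref{eq:SDE} (its coefficients do not involve $c$); it only replaces the initial law $p_{\mathsf{data}}$ by $p_{\mathsf{data}}(\cdot\mymid c)$, so every marginal $p_{X_s}$ is replaced by the conditional density $p_{X_s\mymid c}$. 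Applying the reversal formula to the forward SDE on $[0,1-\delta]$ and relabelling the running time by $t$ so that forward time is $1-t$ (thus $t\in[\delta,1]$ as in \eqref{eq:SDE}), I substitute the explicit coefficients $\mu(u,x) = -x/(2(1-u))$ and $\sigma(u)^2 = 1/(1-u)$ at $u = 1-t$, i.e. $-x/(2t)$ and $1/t$. This produces exactly the drift $\big(\tfrac12 Y_t + \nabla\log p_{X_{1-t}\mymid c}(Y_t\mymid c)\big)/t$ and diffusion coefficient $1/\sqrt t$ of the reverse SDE in \eqref{eq:SDE}. Since the reversed process is started at $X_{1-\delta}$, whose conditional law is precisely the prescribed initialization $p_{X_{1-\delta}\mymid c}$, weak uniqueness of SDE solutions then gives $\{Y_t\}\overset{\mathrm{d}}{=}\{X_{1-t}\}$ on $[\delta,1]$.

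I expect the only real subtlety to be the verification of the hypotheses of the time-reversal theorem --- namely that the marginals $p_{X_s\mymid c}$ are sufficiently regular (smooth, positive, with the required integrability / finite-Fisher-information along the path) so that the reversed drift is well defined and the reversed process is a bona fide diffusion. This is exactly where the cutoff $\delta>0$ is used: on the reverse time interval $[\delta,1]$ the coefficients $1/t$ and $1/\sqrt t$ stay bounded, and for every $t<1$ the density $p_{X_{1-t}\mymid c}$ is a Gaussian convolution of $p_{\mathsf{data}}(\cdot\mymid c)$ with a kernel of strictly positive variance (by \eqref{eq:cont-X}), hence $C^\infty$ with well-controlled derivatives; under the mild regularity assumed of $p_{\mathsf{data}}$ this makes all hypotheses hold and the argument goes through without further complication.
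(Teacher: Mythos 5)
Your proposal is correct. Note that the paper does not actually prove Lemma~\ref{lem:cont}; it simply attributes the result to \citet{song2020score}. Your argument is precisely the standard derivation underlying that citation: the integrating-factor computation for the linear (Ornstein--Uhlenbeck--type) forward SDE gives $\Phi(t,\tau)=\sqrt{(1-t)/(1-\tau)}$ and variance $(1-t)\int_\tau^t(1-s)^{-2}\,\mathrm{d}s=(t-\tau)/(1-\tau)$, both of which check out, and the Anderson/Haussmann--Pardoux time-reversal formula with $\mu(u,x)=-x/(2(1-u))$, $\sigma(u)^2=1/(1-u)$ evaluated at $u=1-t$ reproduces the reverse drift $\big(\tfrac12 y+\nabla\log p_{X_{1-t}\mid c}(y\mid c)\big)/t$ and diffusion $1/\sqrt{t}$. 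Your observation that conditioning on $c$ leaves the forward dynamics unchanged and only replaces the marginals by $p_{X_s\mid c}$ is the right way to handle the conditional version, and your closing remark correctly identifies the only genuine hypothesis to verify (regularity/integrability of the marginals for the reversal theorem), which holds here because each $p_{X_{1-t}\mid c}$ with $t<1$ is a Gaussian convolution and hence smooth and strictly positive. Nothing is missing.
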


The above result can be found in~\citet{song2020score}.
When extending this framework to conditional sampling with guidance in \eqref{eq:guidance}, the reverse SDE becomes
%In addition, when considering condition sampling with guidance, it becomes
\begin{align} \label{eq:cont-guidance}
\mathrm{d}Y_t^w &= \Big(\frac{1}{2}Y_t^w + (1+w)\nabla\log p_{X_{1-t}\mymid c}(Y_t^w\mymid c)  - w\nabla\log p_{X_{1-t}}(Y_t^w)\Big)\frac{\mathrm{d}t}{t} + \frac{1}{\sqrt{t}}\mathrm{d}B_t.
\end{align}
The continuous-time framework provides a powerful perspective for understanding and analyzing score-based diffusion models.

\section{Main results}
\label{sec:main}

In this section, we shall present our main theorem and its proof.
For the reverse process with guidance~\eqref{eq:cont-guidance}, 
we prove that after introducing a non-zero guidance into the diffusion process, the expectation of a specific decreasing function of the classifier probability will decrease as $t$ increases.
This is formally stated in the following theorem.

\begin{theorem} \label{thm:main}
Let 
\begin{align}
\phi_t(y) := p_{c\mymid X_{1-t}}(c\mymid y)^{-1}
\end{align}
which is a decreasing map of $p_{c\mymid X_{1-t}}(c\mymid y)$.
It can be shown that for any $\delta < t < 1$,
\begin{align}
&\quad \phi_{t}(Y_{t}^w) - \mathbb{E}\big[\phi_{t+\mathrm{d}t}(Y_{t+\mathrm{d}t}^w) \mymid Y_t^w\big]= \frac{w}{t}p_{c\mymid X_{1-t}}(c\mymid Y_t^w)^{-1}\Big\|\nabla\log p_{X_{1-t}\mymid c}(Y_t^w\mymid c) - \nabla\log p_{X_{1-t}}(Y_t^w)\Big\|_2^2\mathrm{d}t,
\end{align}
where $Y_t^w$ is defined in~\eqref{eq:cont-guidance}.
\end{theorem}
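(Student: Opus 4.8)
The plan is to apply It\^o's formula to the process $t\mapsto\phi_t(Y_t^w)$ along \eqref{eq:cont-guidance} and then take conditional expectation, which annihilates the Brownian term and turns the left-hand side into $-\big(\partial_t\phi_t + \mathcal{A}_t^w\phi_t\big)(Y_t^w)\,\mathrm{d}t$, where $\mathcal{A}_t^w$ is the generator of \eqref{eq:cont-guidance}, namely $\mathcal{A}_t^w f = \tfrac1t\big(\tfrac12 y + (1+w)\nabla\log p_{X_{1-t}\mymid c}(y\mymid c) - w\nabla\log p_{X_{1-t}}(y)\big)\cdot\nabla f + \tfrac{1}{2t}\Delta f$. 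So it suffices to prove the deterministic pointwise identity $\partial_t\phi_t + \mathcal{A}_t^w\phi_t = -\tfrac{w}{t}\,\|\nabla\phi_t\|_2^2/\phi_t$ for $\delta<t<1$. Indeed, by Bayes' rule $\phi_t(y) = \mathbb{P}(c)^{-1}\,p_{X_{1-t}}(y)/p_{X_{1-t}\mymid c}(y\mymid c)$, hence $\nabla\log\phi_t = -\big(\nabla\log p_{X_{1-t}\mymid c}(\cdot\mymid c) - \nabla\log p_{X_{1-t}}\big)$ and $\|\nabla\phi_t\|_2^2/\phi_t = \phi_t\|\nabla\log\phi_t\|_2^2 = p_{c\mymid X_{1-t}}(c\mymid\cdot)^{-1}\big\|\nabla\log p_{X_{1-t}\mymid c}(\cdot\mymid c) - \nabla\log p_{X_{1-t}}\big\|_2^2$, which is precisely the right-hand side of the claim. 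The functions involved are smooth and strictly positive on $\{\delta<t<1\}$, because there $p_{X_{1-t}}$ and $p_{X_{1-t}\mymid c}$ are Gaussian-smoothed versions of the data distribution (respectively its restriction to class $c$) by Lemma~\ref{lem:cont}, so It\^o's formula applies.

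\textbf{Two ingredients.}
Write $p_t := p_{X_{1-t}\mymid c}(\cdot\mymid c)$ and $q_t := p_{X_{1-t}}$. First I would record the Fokker--Planck equations for these marginals: by Lemma~\ref{lem:cont} and its unconditional counterpart, $p_t$ and $q_t$ are the time-$t$ densities of the $w=0$ reverse SDEs with drifts $\tfrac1t(\tfrac12 y + \nabla\log p_t)$ and $\tfrac1t(\tfrac12 y + \nabla\log q_t)$ and diffusion matrix $\tfrac1t I$; using $\nabla\cdot(p_t\nabla\log p_t) = \Delta p_t$, the Fokker--Planck equation collapses to $\partial_t p_t = -\tfrac{1}{t}\big(\tfrac d2 p_t + \tfrac12 y\cdot\nabla p_t\big) - \tfrac{1}{2t}\Delta p_t$, and likewise for $q_t$. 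Second, I would expand $\phi_t = \mathbb{P}(c)^{-1}q_t/p_t$ by the quotient rule, substitute both Fokker--Planck equations, and simplify with the pointwise identities $\nabla(q_t/p_t) = \nabla q_t/p_t - (q_t/p_t)\nabla p_t/p_t$ and $\tfrac{\Delta q_t}{p_t} - \tfrac{q_t}{p_t}\tfrac{\Delta p_t}{p_t} = \Delta(q_t/p_t) + 2\nabla\log p_t\cdot\nabla(q_t/p_t)$; this gives $\partial_t\phi_t = -\tfrac{1}{2t}y\cdot\nabla\phi_t - \tfrac{1}{2t}\Delta\phi_t - \tfrac1t\nabla\log p_t\cdot\nabla\phi_t$.

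\textbf{Cancellation and conclusion.}
Adding $\mathcal{A}_t^w\phi_t = \tfrac{1}{2t}y\cdot\nabla\phi_t + \tfrac{1+w}{t}\nabla\log p_t\cdot\nabla\phi_t - \tfrac{w}{t}\nabla\log q_t\cdot\nabla\phi_t + \tfrac{1}{2t}\Delta\phi_t$ to the last display, the Ornstein--Uhlenbeck drift terms cancel, the Laplacian terms cancel, the two $\nabla\log p_t\cdot\nabla\phi_t$ contributions combine to $\tfrac wt\nabla\log p_t\cdot\nabla\phi_t$, and one is left with $\partial_t\phi_t + \mathcal{A}_t^w\phi_t = \tfrac wt(\nabla\log p_t - \nabla\log q_t)\cdot\nabla\phi_t = \tfrac wt\nabla\log(p_t/q_t)\cdot\nabla\phi_t = -\tfrac wt\nabla\log\phi_t\cdot\nabla\phi_t = -\tfrac wt\|\nabla\phi_t\|_2^2/\phi_t$, which combined with the first paragraph is exactly the assertion. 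I expect the main obstacle to be the bookkeeping in the second ingredient --- tracking gradients, Laplacians and products through the quotient rule and the Fokker--Planck substitution, in particular verifying the Laplacian identity above --- rather than anything conceptual; a secondary point needing a word of care is the rigorous justification that conditioning on $Y_t^w$ kills the stochastic integral (localization together with the regularity and positivity of $\phi_t$ away from $t=1$), although at the $\mathrm{d}t$ level this reduces to the generator computation just carried out.
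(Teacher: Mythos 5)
Your proposal is correct and reaches the stated identity, but it reaches the crucial intermediate fact by a genuinely different route than the paper. Both arguments reduce the theorem, via It\^o's formula and taking conditional expectation, to the pointwise identity $\partial_t\phi_t+\mathcal{A}^w_t\phi_t=\tfrac{w}{t}\big(\nabla\log p_{X_{1-t}\mymid c}-\nabla\log p_{X_{1-t}}\big)\cdot\nabla\phi_t=-\tfrac{w}{t}\phi_t\big\|\nabla\log p_{X_{1-t}\mymid c}-\nabla\log p_{X_{1-t}}\big\|_2^2$, and both isolate the guidance contribution by observing that the $w=0$ part of the generator annihilates $\partial_t\phi_t+\mathcal{A}^0_t\phi_t$. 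The difference is how that annihilation is established. The paper proves it probabilistically: Lemma~\ref{lem:invariance} shows $p_{c\mymid X_\tau}(c\mymid\cdot)^{-1}$ is a martingale along the unguided reverse process by a direct Bayes-rule computation against the explicit Gaussian transition kernel of Lemma~\ref{lem:cont} --- the conditioning on $c$ cancels inside the integral and the Chapman--Kolmogorov identity does the rest. You instead derive the same fact analytically, by writing down the Fokker--Planck equations for $p_t=p_{X_{1-t}\mymid c}$ and $q_t=p_{X_{1-t}}$ and pushing $\partial_t(q_t/p_t)$ through the quotient rule; your Laplacian identity $\tfrac{\Delta q_t}{p_t}-\tfrac{q_t}{p_t}\tfrac{\Delta p_t}{p_t}=\Delta(q_t/p_t)+2\nabla\log p_t\cdot\nabla(q_t/p_t)$ checks out, and the resulting PDE is exactly the backward-equation statement of the paper's martingale property. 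The paper's route buys a shorter, more elementary verification (a single integral identity, no PDE manipulation) and makes the martingale structure explicit, which is also how the authors motivate the result; your route makes the cancellation mechanism in the generator more transparent and would generalize more readily to other drift modifications. One caveat: the step where conditioning kills the stochastic integral, and where the $O(\delta)$ remainders are controlled, is where the paper invests real work (the moment and growth bounds \eqref{eq:classifier-ub}--\eqref{eq:score_bound-2} and the estimates in Section~\ref{sec:proof-diff}); you flag this only in passing, so to make your argument fully rigorous you would still need those integrability estimates, but this does not affect the correctness of the computation itself.
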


The above result reveals that the average reciprocal of classifier probability $p_{c\mymid X_{1-t}}(c\mymid y)^{-1}$ decreases when we add non-zero guidance.
When compared with the case without guidance, that is $w=0$, the total expected improvement over the diffusion process is given by:
\begin{align}
&\int \frac{w}{t}p_{c\mymid X_{1-t}}(c\mymid Y_t^w)^{-1}\Big\|\nabla\log p_{X_{1-t}\mymid c}(Y_t^w\mymid c) - \nabla\log p_{X_{1-t}}(Y_t^w)\Big\|_2^2\mathrm{d}t.
\end{align}
This result reflects an improvement in sample quality, as samples with higher classifier probabilities are favored.

The choice of $p_{c\mymid X_{1-t}}(c\mymid y)^{-1}$ in our analysis is primarily for technical considerations.
It rewards more on the decrease of bad samples with small $p_{c\mymid X_{1-t}}(c\mymid y)$, which means it places greater emphasis on reducing the probability of generating low-quality or misclassified samples. 
This aligns with the initial motivation of introducing guidance.
In practice, Inception Score (IS) is commonly employed to measure sample quality, which is related to the average logarithm of the classifier probability $\mathbb{E}[\log p_{c\mymid X_{1-t}}(c\mymid y)]$.
This is conceptually aligned with the metric in our analysis, with the difference being that IS adopts $\log p_{c\mymid X_{1-t}}(c\mymid y)$ as the weight, while we use $p_{c\mymid X_{1-t}}(c\mymid y)^{-1}$,
but both aim to increase the ratio of high-quality samples (measured by the classifier probability).
In addition, to address potential concerns, we note that although some practical limitations of IS have been identified \citep{barratt2018note}, it remains a commonly used metric for evaluating sample quality in the study of diffusion guidance \citep{dhariwal2021diffusion,ho2021classifier}. Moreover, in our theoretical analysis, we use the true conditional probability, which addresses the estimation issues discussed in \citet{barratt2018note}.

Theorem \ref{thm:main} states that guidance improves the averaged reciprocal of the classifier probability rather than the classifier probability of each individual sample. This suggests that while guidance improves overall sample quality, it may lead to a decline in quality for a small subset of samples. This insight encourages the development of adaptive guidance methods that address this issue and achieve more uniform performance gains, which is a potential practical application of our theory.

Our main result is established through the following key observation, whose proof can be found in Section \ref{subsec:proof-lem-invariance}. 

\begin{lemma} \label{lem:invariance}
For any $\varepsilon > 0$ and $0 \le \tau \le t \le 1-\varepsilon$, we have
\begin{subequations} \label{eq:invariance}
\begin{align}
p_{c\mymid X_t}(c\mymid x)^{-1} = \mathbb{E}_{x_{\tau} \sim X_{\tau}}\big[p_{c\mymid X_{\tau}}(c\mymid x_{\tau})^{-1} \mymid X_t = x\big],
\end{align}
or equivalently, for any $\varepsilon \le \tau \le t \le 1$,
\begin{align}
p_{c\mymid X_{1-\tau}}(c\mymid y)^{-1} = \mathbb{E}_{y_t \sim Y_t}\big[p_{c\mymid X_{1-t}}(c\mymid y_t)^{-1} \mymid Y_{\tau} = y\big],
\end{align}
where, $X_t$ and $Y_t$ are defined in~\eqref{eq:SDE}.
\end{subequations}
\end{lemma}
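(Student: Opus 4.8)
The plan is to prove the second identity in~\eqref{eq:invariance}; the first then follows by relabelling the time variables through $(\tau,t)\mapsto(1-t,1-\tau)$, since Lemma~\ref{lem:cont} identifies the reverse-process conditioning $Y_t\mymid Y_\tau=y$ with the ($c$-conditional) forward conditioning $X_{1-t}\mymid X_{1-\tau}=y$. So fix $\varepsilon\le\tau\le t\le1$, abbreviate $\pi_c\defn p_c(c)$, and recall that Bayes' rule gives, for every $0\le s<1$,
\[
p_{c\mymid X_{1-s}}(c\mymid y)^{-1}=\frac{p_{X_{1-s}}(y)}{\pi_c\,p_{X_{1-s}\mymid c}(y\mymid c)}.
\]

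The only substantive step is to expand the transition density of the reverse process. By Lemma~\ref{lem:cont} the law of $Y_t$ given $Y_\tau=y$ has density $p_{X_{1-t}\mymid X_{1-\tau},c}(\cdot\mymid y,c)$; moreover, since the forward dynamics in~\eqref{eq:forward}--\eqref{eq:SDE} do not involve the class label, the forward kernel $p_{X_{1-\tau}\mymid X_{1-t}}$ (note $1-t\le 1-\tau$) is unchanged by conditioning on $c$, so a further application of Bayes' rule yields
\[
p_{X_{1-t}\mymid X_{1-\tau},c}(y_t\mymid y,c)=\frac{p_{X_{1-\tau}\mymid X_{1-t}}(y\mymid y_t)\,p_{X_{1-t}\mymid c}(y_t\mymid c)}{p_{X_{1-\tau}\mymid c}(y\mymid c)}.
\]
Substituting the two displays into $\mathbb{E}_{y_t\sim Y_t}\big[p_{c\mymid X_{1-t}}(c\mymid y_t)^{-1}\mymid Y_\tau=y\big]=\int p_{c\mymid X_{1-t}}(c\mymid y_t)^{-1}\,p_{X_{1-t}\mymid X_{1-\tau},c}(y_t\mymid y,c)\,\diff y_t$, the conditional marginals $p_{X_{1-t}\mymid c}(y_t\mymid c)$ cancel, and what is left is
\[
\frac{1}{\pi_c\,p_{X_{1-\tau}\mymid c}(y\mymid c)}\int p_{X_{1-t}}(y_t)\,p_{X_{1-\tau}\mymid X_{1-t}}(y\mymid y_t)\,\diff y_t=\frac{p_{X_{1-\tau}}(y)}{\pi_c\,p_{X_{1-\tau}\mymid c}(y\mymid c)},
\]
the last equality being the marginalization identity $\int p_{X_{1-t}}(y_t)\,p_{X_{1-\tau}\mymid X_{1-t}}(y\mymid y_t)\,\diff y_t=p_{X_{1-\tau}}(y)$. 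By Bayes' rule once more, the right-hand side is $p_{c\mymid X_{1-\tau}}(c\mymid y)^{-1}$, which is the claimed identity.

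I expect the difficulty to be bookkeeping rather than depth. One must check that every conditional density occurring above is well defined and strictly positive on the region of integration; this is exactly where the constraints $\tau\ge\varepsilon$ (and $t\le1-\varepsilon$ for the first form) are used, since they keep the relevant marginals nondegenerate Gaussian smoothings of $p_{\mathsf{data}}$ with everywhere-positive densities, which is what licenses the Bayes manipulations and the cancellation. The exchange of expectation and integral is justified by Tonelli's theorem as all integrands are nonnegative. Finally, one should spell out that conditioning on $Y_\tau$ equals conditioning on $\{Y_u\}_{u\le\tau}$ (Markov property of $Y$) and that the joint law of $(Y_\tau,Y_t)$ matches that of $(X_{1-\tau},X_{1-t})$ conditioned on $c$ (Lemma~\ref{lem:cont}, applied to $Y$ initialized at $Y_\delta\sim p_{X_{1-\delta}\mymid c}$).

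As a sanity check and an alternative route, note that~\eqref{eq:invariance} instantiates the general fact that Radon--Nikodym derivatives restricted to an increasing filtration form a martingale under the dominating measure. Setting $\diff\mathbb{Q}/\diff\mathbb{P}=\mathds{1}_{\{c'=c\}}/\pi_c$, so that $\mathbb{Q}$ is the $c$-conditional law and $\{X_{1-s}\}$ under $\mathbb{Q}$ is distributed as $\{Y_s\}$, one checks via the Markov property that $\diff\mathbb{P}/\diff\mathbb{Q}$ restricted to $\sigma(\{Y_u\}_{u\le s})$ is $\sigma(Y_s)$-measurable and equals $\pi_c\,p_{c\mymid X_{1-s}}(c\mymid Y_s)^{-1}$; reading its $\mathbb{Q}$-martingale property between times $\tau$ and $t$ gives precisely the stated identity. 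Either the direct computation or this change-of-measure argument can be presented.
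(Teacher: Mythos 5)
Your proof is correct and follows essentially the same route as the paper's: both arguments apply Bayes' rule twice --- once to rewrite the $c$-conditional reverse-time transition kernel as the (class-independent) forward Gaussian kernel times a ratio of conditional marginals, and once to write $p_{c\mymid X_s}(c\mymid \cdot)^{-1}$ as a ratio of marginals --- so that the conditional densities cancel and the Chapman--Kolmogorov marginalization finishes the computation. The only cosmetic difference is that you carry out the calculation in the reverse-time ($Y$) variables while the paper works with the forward-time ($X$) form and invokes the equivalence afterward; you also correctly flag the point the paper leaves implicit, namely that the conditioning in the first identity must be under the $c$-conditional law of the forward process.
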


%Analysis of Theorem \ref{thm:main} relies on the martingale property stated in Lemma \ref{lem:invariance}, which relies heavily on the property of $p_{X_\tau|X_t}$ in DDPMs and can not be applied for DDIMs. Thus extending our framework to DDIMs remains an open question due to the absence of this key property.

With Lemma \ref{lem:invariance} in hand, we are ready to prove our main theorem.
Before diving into the proof details, we would like to first explain the main analysis idea: First, this result comes from the key observation that the function of reverse process, $p_{c|X_{t}}(c|X_t)^{\rm -1}$, forms a martingale, as stated in Lemma \ref{lem:invariance}, which is established through a careful decomposition of $p_{c|X_{t}}$ and $p_{X_{\tau}|X_{t}}$. Next, the guidance term $s_t(x|c) - s_t(x)$ in classifier-free guidance (CFG) aligns with the direction of $-\nabla p_{c| X_t}(c| x)^{-1} = p_{c| X_t}(c| x)^{-1}[s_t(x|c) - s_t(x)]$, which makes us expect that adding the guidance at time $t$ can decrease $\mathbb{E}_{x_{\tau} \sim X_{\tau}}\big[p_{c| X_{\tau}}(c| x_{\tau})^{-1} | X_t = x\big]$ for all $\tau \le t$. Finally, to achieve the desired result, particular care must be taken in handling first- and second-order differential terms with respect to $t$ for the process $p_{c| X_{1-t}}(c| Y_t^w)^{-1}$ due to its randomness nature, which is completed in the following based on the technique of Ito's formula.

\begin{proof}[Proof of Theorem~\ref{thm:main}]
The relation~\eqref{eq:invariance} in the above lemma gives us
\begin{align}
0 &= \frac{1}{\delta}\Big\{\mathbb{E}\big[p_{c\mymid X_{1-t-\delta}}(c\mymid Y_{t+\delta})^{-1} - p_{c\mymid X_{1-t}}(c\mymid Y_t)^{-1} \mymid Y_{t} = y_{t}\big]\Big\} \notag\\
&= \frac{\partial p_{c\mymid X_{1-t}}(c\mymid y)^{-1}}{\partial t}\mymid_{y = y_t} \!+\! \frac{1}{2t}\mathsf{Tr}\Big(\nabla^2 p_{c\mymid X_{1-t}}(c\mymid y_{t})^{-1}\Big) \notag\\
&\quad+ \nabla p_{c\mymid X_{1-t}}(c\mymid y_{t})^{-1}
 \Big(\Big(\frac{1}{2}y_{t} + \nabla\log p_{X_{1-t}\mymid c}(y_{t}\mymid c)\Big)\frac{1}{t}\Big)+ O(\delta), \label{eq:diff}
\end{align}
where the second relation is established in Section~\ref{sec:proof-diff}.
Here, we let $\delta > 0$ be some small quantity, which depends only on $y_t, t$ and the property of $X_0$.
Similarly, we have
\begin{align}
&\frac{1}{\delta}\Big\{\mathbb{E}\big[p_{c\mymid X_{1-t-\delta}}(c\mymid Y_{t+\delta}^w)^{-1} - p_{c\mymid X_{1-t}}(c\mymid Y_t^w)^{-1} \mymid Y_{t}^w = y_{t}\big]\Big\} \notag\\
&= \frac{\partial p_{c\mymid X_{1-t}}(c\mymid y)^{-1}}{\partial t}\mymid_{y = y_t} + \frac{1}{2t}\mathsf{Tr}\Big(\nabla^2 p_{c\mymid X_{1-t}}(c\mymid y_{t})^{-1}\Big) \notag\\
&+ \nabla p_{c\mymid X_{1-t}}(c\mymid y_{t})^{-1}\Big(\Big(\frac{1}{2}y_{t} + (1+w)\nabla\log p_{X_{1-t}\mymid c}(y_{t}\mymid c)
- w\nabla\log p_{X_{1-t}}(y_{t})\Big)\frac{1}{t}\Big)
+ O(\delta).
\end{align}

Comparing the above two relations leads to
\begin{align}
&\quad\mathbb{E}\big[\phi_{t+\delta}(Y_{t+\delta}^w) \mymid Y_t^w\big] - \phi_{t}(Y_{t}^w) \notag\\
&= \delta\frac{w}{t} \Big(\nabla\log p_{X_{1-t}\mymid c}(Y_{t}^w\mymid c) - \nabla\log p_{X_{1-\tau}}(Y_{t}^w)\Big) \nabla p_{c\mymid X_{1-t}}(c\mymid Y_{t}^w)^{-1} + O(\delta^2)\notag \\
&= -\delta\frac{w}{t} p_{c\mymid X_{1-t}}(c\mymid Y_{t}^w)^{-1}\Big\|\nabla\log p_{X_{1-t}\mymid c}(Y_{t}^w\mymid c)- \nabla\log p_{X_{1-t}}(Y_{t}^w)\Big\|_2^2
+ O(\delta^2),
\end{align}
where the second relation holds since
\begin{align}
&\nabla p_{c\mymid X_{1-t}}(c\mymid y)^{-1} = -p_{c\mymid X_{1-t}}(c\mymid y)^{-1}
\Big(\nabla\log p_{X_{1-t}\mymid c}(y\mymid c) - \nabla\log p_{X_{1-t}}(y)\Big).
\end{align}
Then we can conclude the proof here.

\end{proof}

\begin{figure*}[t]
\vskip 0.2in
\begin{center}
\centerline{\includegraphics[width=0.9\textwidth]{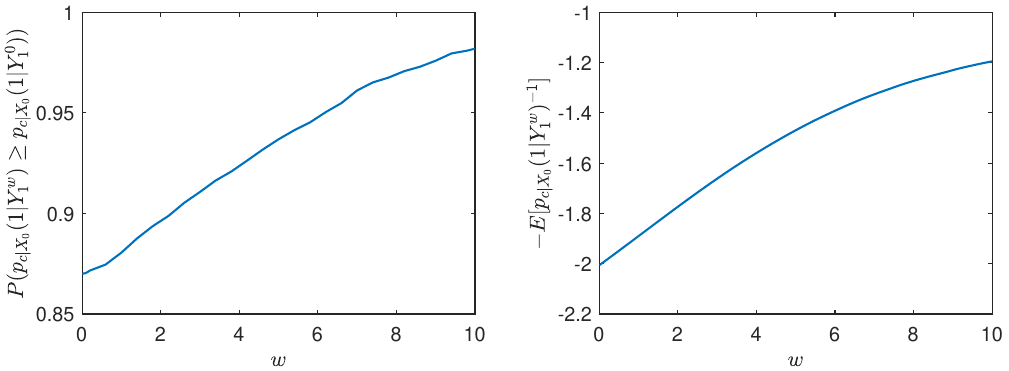}}
\caption{Experimental results on GMM. left: Ratio of samples with improved classifier probabilities for different guidance scales $w$; right: Expectation of $-p_{c\mymid X_0}(1\mymid Y_1^w)^{-1}$ for varying $w$.}
\label{fig:exp}
\end{center}
\vskip -0.2in
\end{figure*}

\begin{figure*}[t]
\vskip 0.2in
\begin{center}
\centerline{\includegraphics[width=0.9\textwidth]{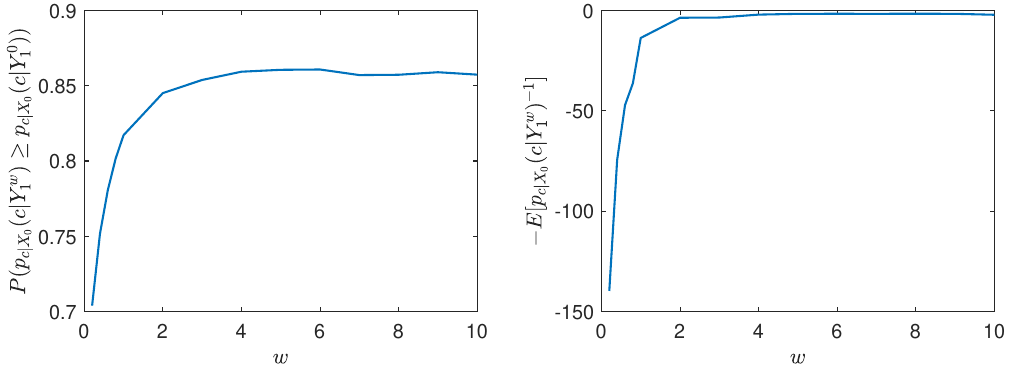}}
\caption{Experimental results on ImageNet dataset. left: Ratio of samples with improved classifier probabilities for different guidance scales $w$; right: Expectation of $-p_{c\mymid X_0}(1\mymid Y_1^w)^{-1}$ for varying $w$.}
\label{fig:exp-imagenet}
\end{center}
\vskip -0.2in
\end{figure*}

\subsection{Numerical validation}
\label{sec:toy-example}

In this section, we present experimental results on the Gaussian Mixture Model (GMM) and ImageNet dataset to demonstrate that guidance does not uniformly enhance the quality of all samples. Instead, it improves overall sample quality by reducing the average reciprocal of the classifier probability. This observation empirically validate our theoretical findings.

%\subsubsection{Gaussian Mixture Model}

\paragraph{Gaussian Mixture Model:}
Let us consider a distribution with two classes $c = 0, 1$, each with equal prior probability $p_c(0) = p_c(1)=0.5$, in a one-dimensional data space ($d=1$).
The data distribution is defined as follows:
\begin{subequations}
\begin{align*}
X_0\mymid c=0 &~~\sim~~ \mathcal{N}(0, 1)\\
X_0\mymid c=1 &~~\sim~~ \frac{1}{2}\mathcal{N}(1, 1) + \frac{1}{2}\mathcal{N}(-1, 1).
\end{align*}
\end{subequations}
According to the DDPM framework with guidance \eqref{eq:guidance}, the reverse process adopts the following update rule. Starting from $Y_N^w\sim\mathcal{N}(0,1)$, the process evolves for $n=N,\cdots,2$: 
%as:
\begin{align}
\label{eq:DDPM}
	Y_{n-1}^{w} &\!=\! \frac{1}{\sqrt{\alpha_{n}}}\big(Y_n^{w}+(1-\alpha_n)\big[- w\nabla \log p_{X_{1-\overline{\alpha}_n}}(Y_n^{w}) \notag\\
    &~+(1+w)\nabla \log p_{X_{1\!-\!\overline{\alpha}_n}\mymid c}(Y_n^{w}\mymid c)\big]\big) + \sqrt{1-\alpha_n} Z_n,
\end{align}
where $Z_n \overset{\mathrm{i.i.d.}}{\sim} \mathcal{N}(0,1)$ is a sequence of independent Gaussian random variables.

Here, we focus on the conditional class $c = 1$. The score functions $\nabla \log p_{X_{1-\overline{\alpha}_n}\mymid c}(x\mymid 1)$, $\nabla \log p_{X_{1-\overline{\alpha}_n}}(x)$, and the classifier probability $p_{c\mymid X_{1-\overline{\alpha}_n}}(1\mymid x)$ are provided in Appendix \ref{appendix:GMM} (cf. \eqref{eq:score-GMM-1}, \eqref{eq:score-GMM-2}, and \eqref{eq:llh-GMM}).
To empirically validate our theoretical findings, we simulate the DDPM framework under different guidance scales $w$.
%In addition, we also show the decrease the average reciprocal of the classifier, which verifies Theorem \ref{thm:main}.
Specifically, we fix $N=4000$, vary $w$ from $0.01$ to $10$, and perform $10^4$ trials for each $w$.
We compute $Y_1^w$ by implementing the reverse process in \eqref{eq:DDPM}, and its counterpart $Y_1^0$ without guidance.
For each trial, we evaluate classifier probability 
$p_{c\mymid X_{0}}(1\mymid Y_1^w)$ and $p_{c\mymid X_{0}}(1\mymid Y_1^0)$, and compute the empirical probability of $P(p_{c\mymid X_{0}}(1\mymid Y_1^w)\ge p_{c\mymid X_0}(1\mymid Y_1^0))$.
In addition, we also calculate the average of $-p_{c\mymid X_0}(1\mymid Y_1^w)^{-1}$ for various $w$.
The results are shown in Figure \ref{fig:exp}.

\paragraph{ImageNet dataset:}
We conduct a numerical experiment on the ImageNet dataset.
Specifically, we generate samples using a pre-trained diffusion model \citep{diffusioncode} with varying values of the guidance level $w$, and evaluate the classifier probabilities using the Inception v3 classifier \citep{szegedy2016rethinking}.
We compute two statistics: $P(p_{c\mymid X_{0}}(1\mymid Y_1^w)\ge p_{c\mymid X_0}(1\mymid Y_1^0))$ and $-\mathbb{E}[p_{c\mymid X_0}(1\mymid Y_1^w)^{-1}]$, averaged over $20000$ random trials --- $20$ trials for each of the $1000$ ImageNet categories.
The experimental results are presented in Figure \ref{fig:exp-imagenet}. 

It is observed that the empirical probability $P(p_{c\mymid X_0}(1\mymid Y_1^w)\ge p_{c\mymid X_0}(1\mymid Y_1^0))$ is less than $1$ for any $w<10$, which indicates the guidance does not achieve uniform improvement in classifier probabilities.
However, the average of $-p_{c\mymid X_0}(1\mymid Y_1^w)^{-1}$ increases with $w$, which explains why guidance effectively enhances sample quality, as predicted by Theorem~\ref{thm:main}.
Moreover, we remark that the performance of diffusion models is commonly evaluated by two metrics in practice: diversity and sample quality. This study primarily focuses on the sample quality measured in a similar way as the Inception Score, which increases with $w$. However, prior work \citet{ho2021classifier} has demonstrated that large values of $w$ can significantly reduce sample diversity, leading to unsatisfactory performance in real-world applications.

%\subsubsection{ImageNet dataset}

\section{Analysis}
\label{sec:analysis}

In this section, we shall provide details in the proof of main results.

\subsection{Proof of Lemma~\ref{lem:invariance}}
\label{subsec:proof-lem-invariance}

According to the equivalence between $X_t$ and $Y_t$ (see~\eqref{eq:cont-XY} in Lemma~\ref{lem:cont}), it is sufficient to focus on the first relation.
Recalling Lemma~\ref{lem:cont} again tells us
\begin{align*}
&\quad\mathbb{E}_{x_{\tau} \sim X_{\tau}}\big[p_{c\mymid X_{\tau}}(c\mymid x_{\tau})^{-1} \mymid X_t = x\big]\notag\\
&= \int_{x_{\tau}} p_{X_{\tau}\mymid X_t, c}(x_{\tau}\mymid x, c) p_{c\mymid X_{\tau}}(c\mymid x_{\tau})^{-1}\mathrm{d}x_{\tau} \\
&= \int_{x_{\tau}} \frac{p_{X_{\tau}\mymid c}(x_{\tau}\mymid c)(2\pi\sigma^2)^{-d/2}\exp(-\frac{\|x-\alpha x_{\tau}\|_2^2}{2\sigma^2})}{p_{X_t\mymid c}(x\mymid c)} \notag\\
&\quad\cdot \frac{p_{X_{\tau}}(x_{\tau})}{p_{X_{\tau}\mymid c}(x_{\tau}\mymid c)p_{c}(c)}\mathrm{d}x_{\tau} \\
&= \frac{\int_{x_{\tau}} p_{X_{\tau}}(x_{\tau})(2\pi\sigma^2)^{-d/2}\exp(-\frac{\|x-\alpha x_{\tau}\|_2^2}{2\sigma^2})\mathrm{d}x_{\tau}}{p_{X_t\mymid c}(x\mymid c)p_{c}(c)} \\
&= \frac{p_{X_t}(x)}{p_{X_t\mymid c}(x\mymid c)p_{c}(c)} = p_{c\mymid X_t}(c\mymid x)^{-1},
\end{align*}
where we let $\alpha = \sqrt{\frac{1-t}{1-\tau}}$ and $\sigma = \sqrt{\frac{t-\tau}{1-\tau}}$.
Here, the first line is just the definition of conditional expectation; 
the second line comes from the Bayes rule and the relation~\eqref{eq:cont-X};
and the last line can be derived by applying the Bayes rule and the relation~\eqref{eq:cont-X} again.

\subsection{Preliminary analysis of $p_{c\mymid X_{1-t}}$}

We begin by establishing some key properties of $p_{c\mymid X_{1-t}}$ to support the proofs of our main results.
%\paragraph{Basic calculations of derivative of $p_{c\mymid X_{1-t}}$.}
Let $R < \infty$ be some quantity such that
\begin{align}\label{eq:def-R}
\mathbb{P}(\|X_0\|_2 \!<\! R) > \frac{1}{2}
\quad\text{and}\quad
\mathbb{P}(\|X_0\|_2\! <\! R\mymid c) > \frac{1}{2}.
\end{align}
Then there exists some quantity $C_{t, k, R} > 0$ depending only on $t, k, R$, such that the following bounds hold:
\begin{subequations}
\begin{align}
&\nabla^k p_{c\mymid X_{1-t}}(c\mymid y)^{-1} \!\le\! \exp(C_{t, k, R}(1 \!+\! \|y\|_2^2)); \label{eq:bound-1}\\
&\frac{\partial^k p_{c\mymid X_{1-t}}(c\mymid y)^{-1}}{\partial t^k} \!\le\! \exp(C_{t, k, R}(1 \!+\! \|y\|_2^2)); \label{eq:bound-2}\\
&\nabla^k \frac{\partial p_{c\mymid X_{1-t}}(c\mymid y)^{-1}}{\partial t} \!\le\! \exp(C_{t, k, R}(1 \!+\! \|y\|_2^2)),\label{eq:bound-3}
\end{align}
\end{subequations}
where $\nabla^k p_{c\mymid X_{1-t}}(c\mymid y)^{-1}$ denotes the $k$-th order gradient with respect to $y$ of function $p_{c\mymid X_{1-t}}(c\mymid y)^{-1}$.

In the following, we focus primarily on the gradient $\nabla p_{c\mymid X_{1-t}}(c\mymid y)^{-1}$,
as the other bounds can be derived using similar techniques.
Notice that $\nabla p_{c\mymid X_{1-t}}(c\mymid y)^{-1}$ satisfies the following decomposition:
\begin{align}
&\quad\nabla p_{c\mymid X_{1-t}}(c\mymid y)^{-1} \nonumber\\
&= -p_{c\mymid X_{1-t}}(c\mymid y)^{-2}\nabla p_{c\mymid X_{1-t}}(c\mymid y) \notag\\
&= -p_{c\mymid X_{1-t}}(c\mymid y)^{-1}\nabla \log p_{c\mymid X_{1-t}}(c\mymid y) \notag\\
&= p_{c\mymid X_{1-t}}(c\mymid y)^{-1}\nabla\big[\log p_{X_{1-t}}(y) - \log p_{X_{1-t}\mymid c}(y\mymid c)\big].\label{eq:bound-grad}
\end{align}
In addition, it can be shown later that
\begin{subequations}
\begin{align} \label{eq:classifier-ub}
p_{c\mymid X_{1-t}}(c\mymid y)^{-1} 
\le 2p_c(c)^{-1}\exp\Big(\frac{(\|y\|_2+\sqrt{t}R)^2}{2(1-t)}\Big),
\end{align}
and
\begin{align}
\big\|\nabla\log p_{X_{1-t}}(y)\big\|_2 &\lesssim \frac{\|y\|_2 + \sqrt{t}R}{1-t} + \frac{d}{\sqrt{1-t}}, \label{eq:score_bound}\\
\big\|\nabla\log p_{X_{1-t}\mymid c}(y)\big\|_2 &\lesssim \frac{\|y\|_2 + \sqrt{t}R}{1-t} + \frac{d}{\sqrt{1-t}},\label{eq:score_bound-2}
\end{align}
\end{subequations}
where $f\lesssim g$ implies that there exists a universal constant $C > 0$ such that $f \le Cg$. 
By inserting \eqref{eq:classifier-ub} and \eqref{eq:score_bound} into \eqref{eq:bound-grad},
the gradient $\nabla p_{c\mymid X_{1-t}}(c\mymid y)^{-1}$ can be controlled directly.
%if we can establish the above two claims.

\paragraph{Proof of Claim~\eqref{eq:classifier-ub} - \eqref{eq:score_bound-2}.}
We begin with establishing \eqref{eq:classifier-ub}.
First, according to Lemma \ref{lem:cont}, random variable $X_{1-t}|X_0$ follows Gaussian distribution $\mathcal{N}(\sqrt{t}X_0,(1-t)I)$. Thus we have 
\begin{align}
&p_{X_{1-t}}(y) = \int_{x_0} p_{X_0}(x_0)p_{X_{1-t}|X_0}(y|x_0)\mathrm{d}x_0\notag\\
&= \int_{x_0} p_{X_0}(x_0)(2\pi(1-t))^{-d/2}\exp\left(-\frac{\|y-\sqrt{t}x_0\|_2^2}{2(1-t)}\right)\mathrm{d}x_0 \notag\\
&\le (2\pi(1-t))^{-d/2}\int_{x_0} p_{X_0}(x_0)\mathrm{d}x_0\nonumber\\
&= (2\pi(1-t))^{-d/2}.\label{eq:proof-classifier-ub-1}
\end{align}
Moreover, recalling the definition of $R$ in \eqref{eq:def-R}, we have
\begin{align}
&p_{X_{1-t}\mymid c}(y\mymid c) 
\ge p_{X_{1-t},\|X_0\|_2 < R\mymid c}(y\mymid c) \notag\\
& = \mathbb{P}(\|X_0\|_2 < R\mymid c)p_{X_{1-t}\mymid c,\|X_0\|_2 < R}(y\mymid c,\|X_0\|_2 < R)\notag\\
&\ge \frac{1}{2}\inf_{x_0: \|x_0\|_2 < R} (2\pi(1-t))^{-d/2}\exp\left(-\frac{\|y-\sqrt{t}x_0\|_2^2}{2(1-t)}\right) \\
&\ge \frac{1}{2} (2\pi(1-t))^{-d/2}\exp\left(-\frac{(\|y\|_2+\sqrt{t}R)^2}{2(1-t)}\right),\label{eq:proof-classifier-ub-2}
\end{align}
where  $p_{X_{1-t},\|X_0\|_2 < R\mymid c}(y\mymid c)$ denotes the joint probability density of $X_{1-t}$ and the binary random variable indicating $\|X_0\|_2 < R$ or not, and $p_{X_{1-t}\mymid c,\|X_0\|_2 < R}(y\mymid c)$ denotes the probability density of $X_{1-t}$ conditioned on the class label $c$ and $\|X_0\|_2 < R$.
Combining \eqref{eq:proof-classifier-ub-1} and \eqref{eq:proof-classifier-ub-2}, we have
\begin{align*}
p_{c\mymid X_{1-t}}(c\mymid y)^{-1} 
&= \frac{p_{X_{1-t}}(y)}{p_c(c)p_{X_{1-t}\mymid c}(y\mymid c)}\\
&\le 2p_c(c)^{-1}\exp\Big(\frac{(\|y\|_2+\sqrt{t}R)^2}{2(1-t)}\Big).
\end{align*}

Next, we shall prove \eqref{eq:score_bound}. For $t < 1$, recalling that the
random variable $X_{1-t}|X_0$ follows Gaussian distribution $\mathcal{N}(\sqrt{t}X_0,(1-t)I)$, 
the score function has the following expression
\begin{align}
&\quad \nabla\log p_{X_{1-t}}(y)\notag\\
&= -p_{X_{1-t}}(y)^{-1}\int_{x_0} p_{X_0}(x_0)(2\pi(1-t))^{-d/2}\notag\\ 
&\qquad\qquad \cdot \exp\Big(-\frac{\|y - \sqrt{t}x_0\|_2^2}{2(1-t)}\Big)\frac{y - \sqrt{t}x_0}{1-t}\mathrm{d}x_0 \notag\\
&= -\int_{x_0} p_{X_0\mymid X_{1-t}}(x_0\mymid y)\frac{y - \sqrt{t}x_0}{1-t}\mathrm{d}x_0.
\end{align}
Moreover, noticing that for any $D>0$,
\begin{align*}
&\quad\big\|\nabla\log p_{X_{1-t}}(y)\big\|_2\notag\\
&=\int_{x_0:\big\|\frac{y - \sqrt{t}x_0}{\sqrt{1-t}}\big\|_2 \le D} p_{X_0\mymid X_{1-t}}(x_0\mymid y)\left\|\frac{y - \sqrt{t}x_0}{1-t}\right\|_2\mathrm{d}x_0\notag\\
&\quad + \Big\|p_{X_{1-t}}(y)^{-1}\int_{x_0:\big\|\frac{y - \sqrt{t}x_0}{\sqrt{1-t}}\big\|_2 > D} p_{X_0}(x_0)(2\pi(1-t))^{-d/2}\notag\\
&\qquad\cdot\exp\Big(-\frac{\|y - \sqrt{t}x_0\|_2^2}{2(1-t)}\Big)\frac{y - \sqrt{t}x_0}{1-t}\mathrm{d}x_0\Big\|_2.
%\int_{x_0:\big\|\frac{y - \sqrt{t}x_0}{\sqrt{1-t}}\big\|_2 > D} p_{X_0\mymid X_{1-t}}(x_0\mymid y)\frac{y - \sqrt{t}x_0}{1-t}\mathrm{d}x_0.
\end{align*}
For the first term, we have
\begin{align*}
&\int_{x_0:\big\|\frac{y - \sqrt{t}x_0}{\sqrt{1-t}}\big\|_2 \le D} p_{X_0\mymid X_{1-t}}(x_0\mymid y)\left\|\frac{y - \sqrt{t}x_0}{1-t}\right\|_2\mathrm{d}x_0\le \frac{D}{\sqrt{1-t}}.
\end{align*}
For the second term, noticing that
\begin{align*}
p_{X_{1-t}}(y) 
%&\ge \frac{1}{2}\inf_{x_0:\|x_0\|_2 < R}(2\pi(1-t))^{-d/2}\exp\Big(-\frac{\|y - \sqrt{t}x_0\|_2^2}{2(1-t)}\Big) \\
&\ge \frac12(2\pi(1-t))^{-d/2}\exp\left(-\frac{(\|y\|_2 + \sqrt{t}R)^2}{2(1-t)}\right),
\end{align*}
we have
\begin{align*}
&\quad\Big\|p_{X_{1-t}}(y)^{-1}\int_{x_0:\big\|\frac{y - \sqrt{t}x_0}{\sqrt{1-t}}\big\|_2 > D} p_{X_0}(x_0)(2\pi(1-t))^{-d/2}\notag\\
&\qquad\cdot \exp\Big(-\frac{\|y - \sqrt{t}x_0\|_2^2}{2(1-t)}\Big)\frac{y - \sqrt{t}x_0}{1-t}\mathrm{d}x_0\Big\|_2\notag\\
&\le  2\exp\Big(\frac{(\|y\|_2 + \sqrt{t}R)^2}{2(1-t)}\Big)\int_{x_0:\big\|\frac{y - \sqrt{t}x_0}{\sqrt{1-t}}\big\|_2 > D} p_{X_0}(x_0)\notag\\ 
&\qquad\cdot\exp\left(-\frac{\|y - \sqrt{t}x_0\|_2^2}{2(1-t)}\right)\left\|\frac{y - \sqrt{t}x_0}{1-t}\right\|_2\mathrm{d}x_0\notag\\
&\lesssim \frac{2}{\sqrt{1-t}}\exp\left(\frac{(\|y\|_2 + \sqrt{t}R)^2}{2(1-t)}-cD^2+cd\right), 
\end{align*}
where $c$ is a universal constant.

By choosing 
$$
D = C \left(\frac{\|y\|_2 + \sqrt{t}R}{\sqrt{1-t}} + d\right)
$$
for some constant $C > 0$ large enough, we have 
$$
\big\|\nabla\log p_{X_{1-t}}(y)\big\|_2 \le \frac{2D}{\sqrt{1-t}}\lesssim \frac{\|y\|_2 + \sqrt{t}R}{1-t} + \frac{d}{\sqrt{1-t}}.
$$
Similarly, we could derive that
$$
\big\|\nabla\log p_{X_{1-t}\mymid c}(y\mymid c)\big\|_2 \le \frac{2D}{\sqrt{1-t}}\lesssim \frac{\|y\|_2 + \sqrt{t}R}{1-t} + \frac{d}{\sqrt{1-t}}.
$$

%\begin{align*}
%\frac{\partial^2}{\partial t^2}p_{c\mymid X_{1-t}}(c\mymid y)^{-1}
%\end{align*}
%\begin{align*}
%\frac{\partial^k}{\partial t^k}p_{c\mymid X_{1-t}}(c\mymid y)^{-1}
%\nabla^k p_{c\mymid X_{1-t}}(c\mymid y)^{-1}
%\nabla^k \frac{\partial}{\partial t}p_{c\mymid X_{1-t}}(c\mymid y)^{-1}
%\end{align*}

\subsection{Proof of Claim~\eqref{eq:diff}}
\label{sec:proof-diff}

We provide a detailed proof of Claim~\eqref{eq:diff} by analyzing the decomposition of the expectation.
We start by decomposing the expectation as follows:
\begin{align*}
&\quad \mathbb{E}\big[p_{c\mymid X_{1-t-\delta}}(c\mymid Y_{t+\delta})^{-1} - p_{c\mymid X_{1-t}}(c\mymid Y_t)^{-1} \mymid Y_{t} = y_{t}\big]\notag\\
&= \mathbb{E}\big[p_{c\mymid X_{1-t}}(c\mymid Y_{t+\delta})^{-1} \!-\! p_{c\mymid X_{1-t}}(c\mymid Y_t)^{-1} \mymid Y_{t} = y_{t}\big] \\
&\quad+ \mathbb{E}\big[p_{c\mymid X_{1-t-\delta}}(c\mymid Y_{t+\delta})^{-1} \!-\! p_{c\mymid X_{1-t}}(c\mymid Y_{t+\delta})^{-1} \mymid Y_{t} \!=\! y_{t}\big]
\end{align*}
In the following, we shall analyze these two terms separately.

\paragraph{Analysis of the first term.}
%\begin{align*}
%\mathbb{E}\big[p_{c\mymid X_{1-t}}(c\mymid Y_{t+\delta})^{-1} - p_{c\mymid X_{1-t}}(c\mymid Y_t)^{-1} \mymid Y_{t} = y_{t}\big]
%\end{align*}
Applying Ito's formula gives us
\begin{align}\label{eq:proof-diff-firstterm}
p_{c\mymid X_{1-t}}(c\mymid Y_{t+\delta})^{-1} - p_{c\mymid X_{1-t}}(c\mymid Y_{t})^{-1}
&= \int_t^{t+\delta} \bigg\{
\frac{1}{2s}\mathsf{Tr}\Big(\nabla^2 p_{c\mymid X_{1-t}}(c\mymid Y_{s})^{-1}\Big)\mathrm{d}s \notag \\
&\quad+ \nabla p_{c\mymid X_{1-t}}(c\mymid Y_{s})^{-1}
\cdot \Big(\Big(\frac{1}{2}Y_{s} + \nabla\log p_{X_{1-s}\mymid c}(Y_{s}\mymid c)\Big)\frac{\mathrm{d}s}{s} + \frac{1}{\sqrt{s}}\mathrm{d}B_{s}\Big) \bigg\}.
\end{align}

We further decompose the first term by using Ito's formula again as
\begin{align}
&\quad \mathsf{Tr}\Big(\nabla^2 p_{c\mymid X_{1-t}}(c\mymid Y_{s})^{-1}\Big)
- \mathsf{Tr}\Big(\nabla^2 p_{c\mymid X_{1-t}}(c\mymid Y_{t})^{-1}\Big)\notag\\
&= \int_t^{s} \bigg\{
\frac{1}{2r}\mathsf{Tr}\Big(\nabla^2 \mathsf{Tr}\Big(\nabla^2 p_{c\mymid X_{1-t}}(c\mymid Y_{r})^{-1}\Big)\Big)\mathrm{d}r \notag\\
&\qquad+ \nabla \mathsf{Tr}\Big(\nabla^2 p_{c\mymid X_{1-t}}(c\mymid Y_{r})^{-1}\Big)\cdot \Big(\Big(\frac{1}{2}Y_{r}  + \nabla\log p_{X_{1-r}\mymid c}(Y_{r}\mymid c)\Big)\frac{\mathrm{d}r}{r} + \frac{1}{\sqrt{r}}\mathrm{d}B_{r}\Big) \bigg\}.\label{eq:proof-temp-1}
\end{align}
According to bound \eqref{eq:bound-1}, we have
\begin{align}\label{eq:proof-temp-2}
\mathbb{E}\Big[\mathsf{Tr}\Big(\nabla^2 \mathsf{Tr}\Big(\nabla^2 p_{c\mymid X_{1-t}}(c\mymid Y_{r})^{-1}\Big)\Big) \mymid Y_{t} = y_{t}\Big]\le \mathbb{E}\Big[\exp(C_{r, 4, R} + C_{r, 4, R}\|Y_r\|_2^2) \mymid Y_{t} = y_{t}\Big]
< \infty
\end{align}
and
\begin{align}\label{eq:proof-temp-3}
\mathbb{E}\Big[\nabla \mathsf{Tr}\Big(\nabla^2 p_{c\mymid X_{1-t}}(c\mymid Y_{r})^{-1}\Big) 
\cdot \Big(\Big(\frac{1}{2}Y_{r} + \nabla\log p_{X_{1-r}\mymid c}(Y_{r}\mymid c)\Big) \mymid Y_{t} = y_{t}\Big]
< \infty.
\end{align}
Inserting \eqref{eq:proof-temp-2} and \eqref{eq:proof-temp-3} into \eqref{eq:proof-temp-1}, we have for $t\le s\le t+\delta$,
\begin{align}\label{eq:proof-temp-4}
&\quad\mathsf{Tr}\Big(\nabla^2 p_{c\mymid X_{1-t}}(c\mymid Y_{s})^{-1}\Big) = \mathsf{Tr}\Big(\nabla^2 p_{c\mymid X_{1-t}}(c\mymid Y_{t})^{-1}\Big) + O(\delta).
\end{align}
Similarly, we could get that for $t\le s\le t+\delta$,
\begin{align}\label{eq:proof-temp-5}
&\quad \mathbb{E}\Big[\nabla p_{c\mymid X_{1-t}}(c\mymid Y_{s})^{-1}
\cdot \Big(\Big(\frac{1}{2}Y_{s}+ \nabla\log p_{X_{1-s}\mymid c}(Y_{s}\mymid c)\Big) \mymid Y_{t} = y_{t}\Big] \notag\\
&= \nabla p_{c\mymid X_{1-t}}(c\mymid y_{t})^{-1}
\cdot \Big(\Big(\frac{1}{2}y_{t}+ \nabla\log p_{X_{1-t}\mymid c}(y_{t}\mymid c)\Big)
+ O(\delta).
\end{align}
Inserting \eqref{eq:proof-temp-4} and \eqref{eq:proof-temp-5} into \eqref{eq:proof-diff-firstterm}, we have
\begin{align*}
&\quad\frac{1}{\delta}\mathbb{E}\left[p_{c\mymid X_{1-t}}(c\mymid Y_{t+\delta})^{-1} - p_{c\mymid X_{1-t}}(c\mymid Y_{t})^{-1}|Y_t=y_t\right]\notag\\
%= \frac{1\delta}\int_t^{t+\delta} \bigg\{
%\frac{1}{2t}\mathsf{Tr}\Big(\nabla^2 p_{c\mymid X_{1-t}}(c\mymid Y_{t})^{-1}\Big)\mathrm{d}s \\
%&\qquad\qquad\qquad\qquad+ \nabla p_{c\mymid X_{1-t}}(c\mymid Y_{t})^{-1}
%\cdot \Big(\Big(\frac{1}{2}Y_{t} + \nabla\log p_{X_{1-t}\mymid c}(Y_{t}\mymid c)\Big)\frac{\mathrm{d}s}{s} + \frac{1}{\sqrt{s}}\mathrm{d}B_{s}\Big) + O(\delta) \mathrm{d} s\bigg\}\notag\\
&=\frac{1}{2t}\mathsf{Tr}\Big(\nabla^2 p_{c\mymid X_{1-t}}(c\mymid y_{t})^{-1}\Big) + \nabla p_{c\mymid X_{1-t}}(c\mymid y_{t})^{-1} \Big(\Big(\frac{1}{2}y_{t} + \nabla\log p_{X_{1-t}\mymid c}(y_{t}\mymid c)\Big)\frac{1}{t} + O(\delta).
\end{align*}

\paragraph{Analysis of the second term.}
The second term can be expressed as:
\begin{align*}
&\quad\mathbb{E}\big[p_{c\mymid X_{1-t-\delta}}(c\mymid Y_{t+\delta})^{-1} - p_{c\mymid X_{1-t}}(c\mymid Y_{t+\delta})^{-1} \mymid Y_{t} = y_{t}\big]\notag\\
&= \mathbb{E}\bigg[\int_t^{t+\delta} \frac{\partial}{\partial s}p_{c\mymid X_{1-s}}(c\mymid Y_{t+\delta})^{-1}\mathrm{d}s \mymid Y_{t} = y_{t}\bigg].
\end{align*}
Similar to the analysis of the first term, we notice that 
\begin{align*}
&\quad \frac{\partial}{\partial s}p_{c\mymid X_{1-s}}(c\mymid y)^{-1} - \frac{\partial}{\partial t}p_{c\mymid X_{1-t}}(c\mymid y)^{-1}= \int_t^{s} \frac{\partial^2}{\partial r^2}p_{c\mymid X_{1-r}}(c\mymid y)^{-1}\mathrm{d}r,
\end{align*}
and according to \eqref{eq:bound-2},
\begin{align*}
\mathbb{E}\bigg[\frac{\partial^2}{\partial r^2}p_{c\mymid X_{1-r}}(c\mymid Y_{t+\delta})^{-1} \mymid Y_{t} = y_{t}\bigg]
< \infty.
\end{align*}
Thus we have
\begin{align*}
% &\mathbb{E}_{y \sim Y_{t+\delta}}\bigg[\frac{\partial}{\partial t}p_{c\mymid X_{1-t}}(c\mymid y)^{-1} \mymid Y_{t} = y_{t}\bigg]\notag\\
% &\quad - \mathbb{E}_{y \sim Y_{t}}\bigg[\frac{\partial}{\partial t}p_{c\mymid X_{1-t}}(c\mymid y)^{-1} \mymid Y_{t} = y_{t}\bigg] \\
% &= \mathbb{E}_{y \sim Y_{t}}\bigg[\int_{t}^{t+\delta}\bigg\{\nabla\frac{\partial}{\partial t}p_{c\mymid X_{1-t}}(c\mymid Y_{s})^{-1} \notag\\
% &\quad \cdot \Big(\frac{1}{2}Y_{s} + \nabla\log p_{X_{1-s}\mymid c}(Y_{s}\mymid c)\Big)\frac{\mathrm{d}s}{s} \\
% &\qquad\qquad+ \frac{1}{2s}\mathsf{Tr}\Big(\nabla^2 \frac{\partial}{\partial t}p_{c\mymid X_{1-t}}(c\mymid Y_{s})^{-1}\Big)\mathrm{d}s\bigg\}
% \mymid Y_{t} = y_{t}\bigg] \\
% &= O(\delta).\nonumber\\
&\quad\frac{1}{\delta}\mathbb{E}\big[p_{c\mymid X_{1-t-\delta}}(c\mymid Y_{t+\delta})^{-1} - p_{c\mymid X_{1-t}}(c\mymid Y_{t+\delta})^{-1} \mymid Y_{t} = y_{t}\big]= \frac{\partial}{\partial t}p_{c\mymid X_{1-t}}(c\mymid y_t)^{-1} + O(\delta).
\end{align*}

%\paragraph{Part III.}
%\begin{align*}
%\mathrm{d}p_{c\mymid X_{1-t}}(c\mymid Y_t)^{-1}
%&= \frac{\partial}{\partial t}p_{c\mymid X_{1-t}}(c\mymid y)^{-1}\mymid_{y = Y_t} + \frac{1}{2t}\mathsf{Tr}\Big(\nabla^2 p_{c\mymid X_{1-t}}(c\mymid Y_{t})^{-1}\Big)\mathrm{d}t \\
%&\qquad\qquad\qquad+ \nabla p_{c\mymid X_{1-t}}(c\mymid Y_{t})^{-1}
%\cdot \Big(\Big(\frac{1}{2}Y_{t} + \nabla\log p_{X_{1-t}\mymid c}(Y_{t}\mymid c)\Big)\frac{\mathrm{d}t}{s} + \frac{1}{\sqrt{t}}\mathrm{d}B_{t}\Big)
%\end{align*}
Combining the above two relations, we could get our desired result.

% \begin{figure}
% \centering
% \includegraphics[scale=0.8]{../fig-exp.pdf}
% \caption{left: Ratio of samples with improved classifier probabilities for different guidance scales $w$; right: Expectation of $-p_{c\mymid X_0}(1\mymid Y_0^w)^{-1}$ for varying $w$.}
% \label{fig:exp}
% \end{figure}
%Plot: (1) ratio of samples with increased $p_{c\mymid X_0}(1\mymid x)$ for different $w$;
%(2) expectation of $-p_{c\mymid X_0}(1\mymid x)^{-1}$ for different $w$.

\section{Discussion}
\label{sec:discussion}

In this paper, we present a theoretical analysis of the impact of guidance in diffusion models under general data distributions.
Specifically, we demonstrate that guidance in the continuous-time process enhances sampling by increasing the average quality of generated samples, as measured by classifier probabilities. 
This result provide a theoretical foundation for the empirical success of guidance methods.
Interestingly, our results show that guidance improves the average reciprocal of classifier probabilities rather than improving every sample individually, implying that some samples may degrade in quality. This observation motivates future work on adaptive guidance strategies for more uniform performance. 
%Specifically, we demonstrate that guidance in the continuous-time process can enhance the sampling process by generating more high-quality samples --- those associated with higher classifier probabilities --- in the average sense.
%Notice that the guidance improves the averaged reciprocal of the classifier probability rather than the classifier probability of each individual sample.
%Our result also suggests that while guidance improves overall sample quality, it may lead to a decline in quality for a small subset of samples.
%Additionally, we prove that the practical discrete-time process converges to the above analyzed continuous-time process, as the number of iterations goes to infinity.
%These results provide a theoretical foundation for the empirical success of guidance methods, and encourages the development of adaptive guidance methods that achieve more uniform performance gains.
In the future, we are interested in extending these results to the concept of Inception Score (IS), demonstrating similar findings when the weights used in IS are applied.

% In this paper, the convergence analysis in Theorem \ref{thm:convergence} is included primarily for completeness.
% The dependencies on $d$, $L$ and $\varepsilon$ may not be optimal, and the smoothness condition might not be necessary. Future research could focus on establishing tighter bounds or analyzing under more general bounds, to broaden the applicability and improve the convergence rate.
%In addition, we are interested in extending these results to the concept of Inception Score (IS), demonstrating similar findings when the weights used in IS are applied.

\appendix

\section{Discretization and robustness analysis}

Consider that practical algorithms operate in discrete time and are subject to score estimation errors, we provide a supplementary analysis of the discretization error and estimation error for completeness.
Specifically, we aim to show the discrete-time process in \eqref{eq:guidance} closely approximates the continuous-time process in \eqref{eq:cont-guidance}, thereby validating the observation from Theorem~\ref{thm:main} in practical settings.
Since our primary focus is on the efficiency of diffusion guidance rather than establishing a convergence theory, the bounds and conditions derived here may not be tight.

In the following, we shall use $Y_t^{w, \mathsf{cont}}$ to denote the continuous process of~\eqref{eq:cont-guidance} in order to distinguish with~\eqref{eq:guidance},
and let
\begin{align}
\overline{\alpha}_n := \prod_{k = 1}^n \alpha_k,
\quad\text{with }\alpha_k := 1-\beta_k
\end{align}
satisfying
\begin{subequations}\label{eq:learning-rate}
\begin{align}
% {\alpha}_{1} &= 1-\frac{1}{N^{c_0}},\\
% {\alpha}_{n} &= 1-\frac{c_1\log N}{N}\min\left\{\beta_1\left(1+\frac{c_1\log N}{N}\right)^n,1\right\},
\overline{\alpha}_{N} &= \frac{1}{N^{c_0}},\\
\overline{\alpha}_{n-1} &= \overline{\alpha}_{n} + \frac{c_1\overline{\alpha}_{n}(1-\overline{\alpha}_{n})\log N}{N},
%X_{1-\overline{\alpha}_n} \overset{d}{=}\sqrt{\overline{\alpha}_n} X_0 + \sqrt{1-\overline{\alpha}_n} Z \overset{d}{=} X_n,
\end{align}
\end{subequations}
where $c_0$ and $c_1$ are constants.

Before presenting the analysis result, we make the following assumptions.
The first assumption states that faithful estimates of the score functions $s_{n}^{\star}(\cdot)$ and $s_{n}^{\star}(\cdot|c)$ are available for all intermediate steps $n$, as follows:
\begin{assumption}\label{ass:estimation}
We assume access to estimates $s_n(Y_{\overline{\alpha}_n}^{w, \mathsf{cont}})$ and $s_n(Y_{\overline{\alpha}_n}^{w, \mathsf{cont}}\mymid c)$ for each $s^{\star}_n(Y_{\overline{\alpha}_n}^{w, \mathsf{cont}})$ and $s^{\star}_n(Y_{\overline{\alpha}_n}^{w, \mathsf{cont}}\mymid c)$ with the averaged $\ell_2$ score estimation error as
\begin{subequations}
\begin{align}
&\frac{1}{N}\sum_{n = 1}^N\mathbb{E} \Big[\big\|s_{n}(Y_{\overline{\alpha}_n}^{w, \mathsf{cont}}\mymid c)- \nabla\log p_{X_{1-\overline{\alpha}_n}\mymid c}(Y_{\overline{\alpha}_n}^{w, \mathsf{cont}}\mymid c)\big\|_2^2\Big] \le \varepsilon_{\mathsf{score}}^2; \\
&\frac{1}{N}\sum_{n = 1}^N\mathbb{E} \Big[\big\|s_{n}(Y_{\overline{\alpha}_n}^{w, \mathsf{cont}})- \nabla\log p_{X_{1-\overline{\alpha}_n}}(Y_{\overline{\alpha}_n}^{w, \mathsf{cont}})\big\|_2^2\Big] \le \varepsilon_{\mathsf{score}}^2.
\end{align}
\end{subequations}
\end{assumption}

We further assume that the sample $Y_t^{w, \mathsf{cont}}$, the score function $\nabla\log p_{X_{1-t}}(Y_t^{w, \mathsf{cont}})$, and the conditional score function $\nabla\log p_{X_{1-t}\mymid c}(Y_t^{w, \mathsf{cont}}\mymid c)$ have bounded second-order moment, which is stated in the following lemma.
\begin{assumption} \label{ass:bound}
There exists some quantity $R$, such that the sum of the second-order moment of the following three random vectors are bounded by $R^2$, that is,
\begin{align}
&\mathbb{E} \Big[\big\|Y_t^{w, \mathsf{cont}}\big\|_2^2 + \big\|\nabla\log p_{X_{1-t}}(Y_t^{w, \mathsf{cont}})\big\|_2^2+ \big\|\nabla\log p_{X_{1-t}\mymid c}(Y_t^{w, \mathsf{cont}}\mymid c)\big\|_2^2\Big] \le R^2.
\end{align}
\end{assumption}

In addition, we consider the case with smooth score functions in this paper, which is stated below.
\begin{assumption} \label{ass:Lip}
Assume that $\nabla\log p_{X_{t}}(x)$ are Lipschitz for all $0<t<1$ such that
\begin{align}
\big\|\nabla\log p_{X_{t}}(x_1)\! - \!\nabla\log p_{X_{t}}(x_2)\big\|_2 \le L\|x_1 \!-\! x_2\|_2.
\end{align}
\end{assumption}

With the above assumptions, We could establish that the discrete-time process converges to the continuous-time process measured by the KL divergence. The proof is postponed to Section \ref{appendix:proof-thm2}.

\begin{theorem} \label{thm:convergence}
Suppose that Assumptions \ref{ass:estimation}, \ref{ass:bound}, and \ref{ass:Lip} hold true.  
Then the sampling process \eqref{eq:guidance} with the learning rate schedule \eqref{eq:learning-rate} satisfies
\begin{align}%Y_{1-\overline{\alpha}_N}^{w, \mathsf{cont}}
&\mathsf{KL}(Y_{\overline{\alpha}_1}^{w, \mathsf{cont}}, Y_1^w) 
\le C\Big(\frac{(1+w^2)L^2d\log^3 N}{N} + \frac{(1+w^4)L^2R^2\log^4 N}{N^2} + (1+w^2)\varepsilon_{\mathsf{score}}^2\log N\Big)
\end{align}
for some constant $C > 0$ large enough,
where $Y_{\overline{\alpha}_1}^{w, \mathsf{cont}}$ and $Y_1^w$ are defined in \eqref{eq:cont-guidance} and \eqref{eq:guidance}, respectively.
\end{theorem}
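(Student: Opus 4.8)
The plan is to bound the KL divergence between the discrete process~\eqref{eq:guidance} and the continuous process~\eqref{eq:cont-guidance} via the Girsanov-type / chain-rule-of-KL argument that is standard in the diffusion-convergence literature (e.g.\ \citet{chen2022sampling,benton2023nearly,li2024provable}), adapted to carry the guidance weight $w$ through every estimate. First I would rewrite the discrete update~\eqref{eq:guidance} as the exact solution over the interval $[\overline{\alpha}_{n},\overline{\alpha}_{n-1}]$ of an SDE with a \emph{frozen} (piecewise-constant-in-time) drift: on that interval the drift uses $Y^{w}_{\overline{\alpha}_n}$ and the estimated scores $s_n(\cdot\mymid c)$, $s_n(\cdot)$ combined with weight $(1+w)$ and $-w$, whereas the reference continuous process $Y^{w,\mathsf{cont}}$ uses the time-varying true scores evaluated at the current state. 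Then, by the data-processing inequality and the chain rule for KL divergence along the two path measures, $\mathsf{KL}(Y^{w,\mathsf{cont}}_{\overline{\alpha}_1},Y^{w}_1)$ is controlled by the sum over $n$ of the expected squared $L_2$ difference of the two drifts, integrated over each subinterval and weighted by $1/t$ (the diffusion coefficient in~\eqref{eq:cont-guidance} is $1/\sqrt{t}$, so Girsanov produces a factor $t$ multiplying the drift gap squared, i.e.\ $\int \tfrac{1}{t}\|\Delta\text{drift}\|_2^2\,\diff t$ up to the usual reparametrization).

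Next I would split each drift gap into three sources and bound them separately. (i) \textbf{Score estimation error}: replacing $\nabla\log p_{X_{1-\overline{\alpha}_n}\mymid c}$ by $s_n(\cdot\mymid c)$ and $\nabla\log p_{X_{1-\overline{\alpha}_n}}$ by $s_n(\cdot)$; after the $(1+w)$ and $-w$ weighting this contributes $O((1+w^2)\varepsilon_{\mathsf{score}}^2)$ per step on average, which sums (with the $\log N$ from the step-size schedule~\eqref{eq:learning-rate}) to the $(1+w^2)\varepsilon_{\mathsf{score}}^2\log N$ term, using Assumption~\ref{ass:estimation}. (ii) \textbf{Discretization of the score in space}: the difference between the score evaluated at $Y^{w}_{\overline{\alpha}_n}$ and at $Y^{w,\mathsf{cont}}_t$ for $t$ in the subinterval; here Assumption~\ref{ass:Lip} converts this to $L^2$ times the expected squared displacement of the process over one step, and the displacement over an interval of length $\asymp \frac{(1-\overline{\alpha}_n)\log N}{N}$ has a Brownian part contributing $\asymp \frac{d\log N}{N}$ (giving, after summing $N$ steps and accounting for the $w$-weighting, the $\frac{(1+w^2)L^2 d\log^3 N}{N}$ term) and a drift part contributing a higher-order $\frac{\log^2 N}{N^2}$ piece that, multiplied by $L^2R^2$ via Assumptions~\ref{ass:bound} and~\ref{ass:Lip}, yields $\frac{(1+w^4)L^2R^2\log^4 N}{N^2}$. (iii) \textbf{Discretization in time of the score}: the difference between $\nabla\log p_{X_{1-t}}$ and $\nabla\log p_{X_{1-\overline{\alpha}_n}}$, which I would handle either by a further Lipschitz-in-time / moment bound or by absorbing it into the same displacement estimate, contributing only lower-order terms. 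Summing (i)--(iii) over $n=N,\dots,2$ with the schedule~\eqref{eq:learning-rate} (whose $\sum_n \frac{1-\overline{\alpha}_n}{\overline{\alpha}_n}\cdot\frac{\log N}{N}$-type sums produce the extra $\log N$ factors) gives the stated bound, and the initialization mismatch at step $N$ (standard Gaussian vs.\ $p_{X_{1-\overline{\alpha}_N}\mymid c}$) is negligible because $\overline{\alpha}_N = N^{-c_0}$ makes $X_{1-\overline{\alpha}_N}$ already close to Gaussian.

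The main obstacle I anticipate is controlling the moments uniformly so that the Girsanov / Novikov-type conditions actually hold and the per-step drift-gap expectations are finite: the guidance drift $(1+w)\nabla\log p_{X_{1-t}\mymid c} - w\nabla\log p_{X_{1-t}}$ can be large near $t\to 1$ where $1/(1-t)$ blows up (recall the score bounds~\eqref{eq:score_bound}--\eqref{eq:score_bound-2} scale like $\frac{\|y\|_2+\sqrt tR}{1-t}$), and the factor $1/t$ in the SDE is benign but the early-time cutoff and the schedule endpoints must be reconciled. I would address this by using Assumption~\ref{ass:bound} to keep the second moments of the relevant vectors bounded by $R^2$ along the \emph{continuous} reference process, and by a Gr\"onwall / bootstrap argument to transfer moment control to the discrete process over one step at a time, so that each term in the telescoped KL bound is legitimately finite and the $O(\cdot)$ constants are uniform. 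A secondary technical point is tracking the $w$-dependence honestly: the $(1+w)$ and $w$ coefficients must be squared wherever a drift difference is squared, which is exactly what produces the $(1+w^2)$ and $(1+w^4)$ prefactors, and care is needed not to lose a power of $w$ when cross terms between the conditional and unconditional score errors are bounded by Cauchy--Schwarz.
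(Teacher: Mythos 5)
Your proposal follows essentially the same route as the paper's (admittedly brief) proof: a Girsanov/chain-rule-of-KL decomposition in the style of \citet{chen2022sampling} reducing the KL to a sum of expected squared drift gaps weighted by $\mathrm{d}t/t$, a split into score-estimation error (Assumption~\ref{ass:estimation}, giving the $(1+w^2)\varepsilon_{\mathsf{score}}^2\log N$ term) and discretization error handled via the Lipschitz bound $L^2\|Y^{w,\mathsf{cont}}_{\overline{\alpha}_n}-Y^{w,\mathsf{cont}}_t\|_2^2$, with the one-step displacement bounded through its Brownian part ($\asymp d$) and drift part ($\asymp(1+w)^2R^2$ via Assumption~\ref{ass:bound}), exactly reproducing the $(1+w^2)$ and $(1+w^4)$ prefactors. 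The bookkeeping of the step-size schedule and the initialization term also matches, so no substantive difference from the paper's argument.
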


This theorem proves that, after a sufficiently large number of iterations $N$, the sample distribution of the discrete-time process $Y_n^w$ converges to that of the continuous-time process $Y_{\overline{\alpha}_1}^{w, \mathsf{cont}}$.
The latter corresponds to data contaminated by noise with variance $1-\overline{\alpha}_1$.
According to Theorem \ref{thm:convergence},
the sampling process \eqref{eq:guidance} with the learning rate schedule \eqref{eq:learning-rate} satisfies
\begin{align*}
\mathbb{E}[p(c | Y_1^w)^{-1}] \le \mathbb{E}[p(c | Y_{\overline{\alpha}_1}^{w, \mathsf{cont}})^{-1}] + \mathbb{E}[(p(c | Y_{1}^{w})^{-1}-1)\mathds{1}(p(c | Y_{1}^{w})^{-1} > \tau)],
\end{align*}
where $\tau$ is defined as the largest value satisfying
\begin{align*} 
\mathsf{TV}(Y_{\overline{\alpha}_1}^{w, \mathsf{cont}}, Y_1^w) \le \mathbb{P}(p(c | Y_{1}^{w})^{-1} > \tau).
\end{align*}
This further implies the following relative influence from discretization, the ratio between the improvements of $Y_1^w$ and $Y_{\overline{\alpha}_1}^{w, \mathsf{cont}}$ over $X_{\overline{\alpha}_1} = Y_{\overline{\alpha}_1}^{0, \mathsf{cont}}$, obeys
\begin{align}\label{eq:relative-error} 
\frac{\mathbb{E}[p(c | Y_{\overline{\alpha}_1}^{0, \mathsf{cont}})^{-1}] - \mathbb{E}[p(c | Y_1^w)^{-1}]}{\mathbb{E}[p(c | Y_{\overline{\alpha}_1}^{0, \mathsf{cont}})^{-1}] - \mathbb{E}[p(c | Y_{\overline{\alpha}_1}^{w, \mathsf{cont}})^{-1}]}
\ge 1 - \frac{\mathbb{E}[(p(c | Y_{1}^{w})^{-1}-1)\mathds{1}(p(c | Y_1^w)^{-1} > \tau)]}{\mathbb{E}[p(c | Y_{\overline{\alpha}_1}^{0, \mathsf{cont}})^{-1}] - \mathbb{E}[p(c | Y_{\overline{\alpha}_1}^{w, \mathsf{cont}})^{-1}]}.
\end{align}

\subsection{Numerical validation}

For different values of $\mathsf{TV}(Y_{\overline{\alpha}_1}^{w, \mathsf{cont}}, Y_1^w)$, we empirically validate the aforementioned result on the ImageNet dataset. 
Specifically, we generate $2\times 10^4$ samples $Y_1^w$ under various guidance level $w$ and their counterparts $Y_0^w$ without guidance by using a pre-trained diffusion model \citep{diffusioncode}, and evaluate the classifier probability $p(c|Y_1^w)$ and $p(c|Y_1^0)$ by using the Inception v3 classifier \citep{szegedy2016rethinking}.
Finally, we evaluate the relative error in \eqref{eq:relative-error}.
Here we use $\mathbb{E}[p(c | Y_{1}^{0})^{-1}]-\mathbb{E}[p(c | Y_{1}^{w})^{-1}]$ as an estimate of $\mathbb{E}[p(c | Y_{\overline{\alpha}_1}^{0, \mathsf{cont}})^{-1}] - \mathbb{E}[p(c | Y_{\overline{\alpha}_1}^{w, \mathsf{cont}})^{-1}]$, and calculate the ratio of empirical average
$$
\frac{\mathbb{E}[(p(c | Y_{1}^{w})^{-1}-1)\mathds{1}(p(c | Y_1^w)^{-1} > \tau)]}{\mathbb{E}[p(c | Y_{1}^{0})^{-1}]-\mathbb{E}[p(c | Y_{1}^{w})^{-1}]}.
$$

The results are presented in the following table for various values of the TV distance and $w$, which indicate that the relative error remains small, particularly for practical choices of $w \ge 1$.

\begin{table}[htbp]
\centering
\caption{Empirical values of $\frac{\mathbb{E}[(p(c | Y_{1}^{w})^{-1}-1)\mathds{1}(p(c | Y_1^w)^{-1} > \tau)]}{\mathbb{E}[p(c | Y_{1}^{0})^{-1}]-\mathbb{E}[p(c | Y_{1}^{w})^{-1}]}$ under different $w$ and TV.}
\label{tab:tv-guidance}
\begin{tabular}{c|cccccccc}
\toprule
\textsf{TV} & $w=0.2$ & $0.4$ & $0.6$ & $0.8$ & $1$ & $2$ & $3$ & $4$ \\
\midrule
$0.30$ & 0.447 & 0.196 & 0.115 & 0.085 & 0.029 & 0.006 & 0.006 & 0.002 \\
$0.10$ & 0.440 & 0.194 & 0.114 & 0.085 & 0.029 & 0.006 & 0.005 & 0.002 \\
\bottomrule
\end{tabular}
\end{table}

% \begin{align*}
% \begin{array}{c| c c c c c c c c c}
% \hline
% \hline
%  \mathsf{TV}& w=0.2 & 0.4 & 0.6 & 0.8 & 1 & 2 & 3 & 4 \\
% \hline
% 0.30 & 0.447 & 0.196 & 0.115 & 0.085 & 0.029 & 0.006 & 0.006 & 0.002  \\
% 0.10 & 0.440 & 0.194 & 0.114 & 0.085 & 0.029 & 0.006 & 0.005 & 0.002 \\
% \hline
% \end{array}
% \end{align*}
%By combining Theorems \ref{thm:main} and \ref{thm:convergence}, we demonstrate that the observation from Theorem~\ref{thm:main} also holds in practical discrete-time settings. 
%This ensures the validity of the theoretical observations in real-world implementations.

\subsection{Proof of Theorem~\ref{thm:convergence}}
\label{appendix:proof-thm2}

Here, we provide a brief sketch for this result.
With similar analysis as~\citet[Section 5]{chen2022sampling},
\begin{align}
&\quad \mathsf{KL}(Y_{\overline{\alpha}_1}^{w, \mathsf{cont}}, Y_1^w) \notag\\
&\le \sum_{n=2}^N \mathbb{E} \int_{\overline{\alpha}_n}^{\overline{\alpha}_{n-1}} 
\big\|(1+w)[s_{n}(Y_{\overline{\alpha}_n}^{w, \mathsf{cont}}\mymid c) - \nabla\log p_{X_{1-t}\mymid c}(Y_t^{w, \mathsf{cont}}\mymid c)] \notag\\
&\qquad- w[s_{n}(Y_{\overline{\alpha}_n}^{w, \mathsf{cont}}) - \nabla\log p_{X_{1-t}}(Y_t^{w, \mathsf{cont}})]\big\|_2^2\frac{\mathrm{d}t}{t}+ \mathsf{KL}(Y_{\overline{\alpha}_N}^{w, \mathsf{cont}}, Y_N^w).
\label{eq:decomposition}
\end{align}
% \begin{align}
% \mathsf{KL}(Y_{\overline{\alpha}_1}^{w, \mathsf{cont}}, Y_0^w) 
% &\le \sum_{n=2}^N \mathbb{E} \int_{\overline{\alpha}_n}^{\overline{\alpha}_{n-1}} 
% \big\|(1+w)[s_{X_{1-\overline{\alpha}_n}\mymid c}(Y_{\overline{\alpha}_n}^{w, \mathsf{cont}}\mymid c) - \nabla\log p_{X_{1-t}\mymid c}(Y_t^{w, \mathsf{cont}}\mymid c)] \notag\\
% &\qquad- w[s_{X_{1-\overline{\alpha}_n}}(Y_{\overline{\alpha}_n}^{w, \mathsf{cont}}) - \nabla\log p_{X_{1-t}}(Y_t^{w, \mathsf{cont}})]\big\|_2^2\frac{\mathrm{d}t}{t}
% + \mathsf{KL}(Y_{\overline{\alpha}_N}^{w, \mathsf{cont}}, Y_N^w).
% \label{eq:decomposition}
% \end{align}
Then it can be shown that
\begin{align*}
&\quad\mathbb{E} \int_{\overline{\alpha}_n}^{\overline{\alpha}_{n-1}} 
\big\|s_{n}^{\star}(Y_{\overline{\alpha}_n}^{w, \mathsf{cont}}) - \nabla\log p_{X_{1-t}}(Y_t^{w, \mathsf{cont}})\big\|_2^2\frac{\mathrm{d}t}{t}\notag\\
&\le L^2\mathbb{E} \int_{\overline{\alpha}_n}^{\overline{\alpha}_{n-1}} 
\big\|Y_{\overline{\alpha}_n}^{w, \mathsf{cont}} - Y_t^{w, \mathsf{cont}}\big\|_2^2\frac{\mathrm{d}t}{t} \\
&\le L^2\mathbb{E} \int_{\overline{\alpha}_n}^{\overline{\alpha}_{n-1}} 
\bigg\|\int_{\overline{\alpha}_n}^{t}\bigg\{
\Big(\frac{Y_{\tau}^{w, \mathsf{cont}}}{2} + (1+w)\nabla\log p_{X_{1-{\tau}}\mymid c}(Y_{\tau}^{w, \mathsf{cont}}\mymid c) - w\nabla\log p_{X_{1-{\tau}}}(Y_{\tau}^{w, \mathsf{cont}})\Big)\frac{\mathrm{d}{\tau}}{{\tau}} + \frac{\mathrm{d}B_{\tau}}{\sqrt{{\tau}}}\bigg\}\bigg\|_2^2\frac{\mathrm{d}t}{t} \\
&\lesssim L^2((1+w)^2R^2(1-\alpha_n) + d)(1-\alpha_n)^2.
\end{align*}
Inserting the above relation, Assumption~\ref{ass:estimation}, and Assumption~\ref{ass:bound} into~\eqref{eq:decomposition} leads to our desired result.

\section{Basis calculations of GMM}
\label{appendix:GMM}

Consider a GMM defined as:
\begin{align}\label{eq:GMM}
X_0 \sim \sum_{k = 1}^K \pi_k\mathcal{N}(\mu_k, 1),
\end{align}
where $\pi_k$ is the mixing coefficient of the 
$k$-th component, and $\mu_k$ is its mean.
By Lemma \ref{lem:cont}, we have
\begin{align*}
X_{1-\overline{\alpha}_n} &\sim \sum_{k = 1}^K \pi_k\mathcal{N}\big(\sqrt{\overline{\alpha}_n}\mu_k, 1\big)\notag\\
p_{X_{1-\overline{\alpha}_n}}(x) &= \sum_{k = 1}^K \pi_k(2\pi)^{-1/2}\exp\bigg(-\frac{(x - \sqrt{\overline{\alpha}_n}\mu_k)^2}{2}\bigg).
\end{align*}
The gradient of the log-density $\log p_{X{1-\overline{\alpha}_n}}(x)$ can be computed as:
\begin{align}\label{eq:GMM-nabla-px}
\nabla \log p_{X_{1-\overline{\alpha}_n}}(x) 
&= \frac{\nabla p_{X_{1-\overline{\alpha}_n}}(x)}{p_{X_{1-\overline{\alpha}_n}}(x)} = -\sum_{k = 1}^K \pi_k^n \big(x - \sqrt{\overline{\alpha}_n}\mu_k\big) = -x+\sqrt{\overline{\alpha}_n}\sum_{k = 1}^K \pi_k^n \mu_k,
\end{align}
where
\begin{align*}
\pi_k^n = \frac{\pi_k\exp\big(-\frac{(x - \sqrt{\overline{\alpha}_n}\mu_k)^2}{2}\big)}{\sum_{i = 1}^K \pi_i\exp\big(-\frac{(x - \sqrt{\overline{\alpha}_n}\mu_i)^2}{2}\big)}.
\end{align*}
Using this setup for specific cases $(K=2,3)$ leads to 
\begin{align}
&\quad \nabla \log p_{X_{1-\overline{\alpha}_n}\mymid c}(x\mymid 1) = -x + \frac{\sqrt{\overline{\alpha}_n}(1 - \exp(-2\sqrt{\overline{\alpha}_n}x))}{1 + \exp(-2\sqrt{\overline{\alpha}_n}x)}; \label{eq:score-GMM-1}\\
&\quad \nabla \log p_{X_{1-\overline{\alpha}_n}}(x)= -x  + \frac{\sqrt{\overline{\alpha}_n}(1 - \exp(-2\sqrt{\overline{\alpha}_n}x))}{1 + \exp(-2\sqrt{\overline{\alpha}_n}x) + 2\exp\big(\frac{\overline{\alpha}_n}{2} - \sqrt{\overline{\alpha}_n}x\big)}.\label{eq:score-GMM-2}
\end{align}
Additionally, the classifier probability $p_{c\mymid X_{1-\overline{\alpha}_n}}(1\mymid x)$ is given by
\begin{align}\label{eq:llh-GMM}
p_{c\mymid X_{1-\overline{\alpha}_n}}(1\mymid x) 
&= \frac{p_{X_{1-\overline{\alpha}_n}\mymid c}(x\mymid c)p(c)}{p_{X_{1-\overline{\alpha}_n}}(x)}= \frac{1 + \exp(-2\sqrt{\overline{\alpha}_n}x)}{1 + \exp(-2\sqrt{\overline{\alpha}_n}x) + 2\exp\big(\frac{\overline{\alpha}_n}{2} - \sqrt{\overline{\alpha}_n}x\big)}.
\end{align}

\bibliographystyle{apalike}
\bibliography{ICML/refs}

\end{document}